%% file: neurips_2026.tex
\documentclass{article}

\usepackage[preprint]{neurips_2026}

\usepackage[utf8]{inputenc} 
\usepackage[T1]{fontenc}    
\usepackage{hyperref}       
\usepackage{url}            
\usepackage{booktabs}       
\usepackage{amsfonts}       
\usepackage{nicefrac}       
\usepackage{microtype}      
\usepackage{xcolor}         
\usepackage{multirow}         
\usepackage{amsmath, amssymb}
\usepackage[noend]{algpseudocode}
\usepackage{algorithm}
\usepackage{graphicx}
\usepackage{subcaption}
\usepackage{booktabs}
\usepackage{multirow}
\usepackage{siunitx}
\usepackage{wrapfig}
\usepackage{wrapstuff}
\usepackage{graphicx}
\usepackage{tcolorbox}
\usepackage{listings}
\usepackage{tasks}
\usepackage{amsthm}
\usepackage{fvextra} 
\newtheorem{theorem}{Theorem}
\newtheorem{lemma}[theorem]{Lemma}

\title{Token Time Continuous Diffusion for Language Modeling}

\newcommand{\boldt}{\mathbf{t}}
\newcommand{\tg}{t_g}

\newcommand{\z}{\mathbf{z}}

\newcommand{\x}{\mathbf{x}}

\newcommand{\V}{\mathcal{V}}
\newcommand{\D}{\mathcal{D}}

\author{%
  Parikshit Bansal \thanks{parikshitb52@gmail.com}\\
  UT Austin\\
  \And
  Sujay Sanghavi \\
  UT Austin \\
}

\begin{document}

\maketitle

\begin{abstract}

In this paper we introduce token time continuous diffusion (TTCD), a new diffusion language model which {\em (a)} operates in continuous space, deterministically mapping Gaussian noise to a final token canvas with no further sampling, and crucially {\em (b)} incorporates a new notion of per-token times, with some tokens proceeding from noise to token at a faster rate than others.
Continuous space modeling helps TTCD avoid the parallel sampling of multiple tokens, which is a key source of inaccuracy at high speedups for models that iterate purely in discrete space. The notion of per-token times helps TTCD to better model conditional generation, allows for more sure tokens to proceed at a faster rate, and allows for differentiated inter-token influences during refinement. TTCD outperforms discrete models at high speedups. 
We train a 160M parameter TTCD model on OpenWebText, and then self-distill it; we find that at high speedups we are comparable in unconditional generation quality, and outperform in conditional generation, several existing models of similar size trained, on the same data, and self-distilled. We achieve similar gains in Sudoku solving as well.

\end{abstract}


\input{intro}

\input{related_work}
\input{method}

\input{experiments}

\input{conclusion}

\bibliographystyle{plain}
\bibliography{neurips_2026}


\appendix

\input{appendix.tex}


\newpage
\input{checklist.tex}

\end{document}

%% file: intro.tex
\section{Introduction}

The dominant paradigms for diffusion language models (LMs) involve iterative refinements in discrete token space, either by unmasking from null tokens or denoising from uniform randomness~\cite{austin2021structured}. Such discrete space diffusion LMs~\cite{ye2025dream, nie2025large} achieve state of the art numbers on language model benchmarks competitive with their auto-regressive counterparts. 
But when multiple tokens are denoised per step, the distribution they are sampled from is the product marginal and not the true underlying joint~\cite{wu2025fast}. This problem is especially pronounced in the high speedup (or few-step generation) setting leading to drop in quality of the generated text. We call this the factorization problem.

We develop a method where token embeddings evolve in a continuous space - starting from Gaussian noise (time equal to 0) and then deterministically evolving to clean token embeddings (time equal to 1). Continuous-space evolution of token embeddings avoids the factorization problem in high speedup settings~\cite{lee2026one, dieleman2022continuous}. 
Prior work in continuous-space diffusion LMs have pointed out the well-recognized~\cite{ye2023dinoiser, dieleman2022continuous, pynadath2025candi, lee2026one} problem of sudden transitions in the surety implied by the token embeddings at different noise levels. Specifically, it is observed that for a large range of low noise levels embeddings are decoded with high surety to a token, and for a large range of high noise levels they appear completely un-informative; there is a very short intermediate transition zone (in terms of noise levels) where ``noise becomes data''~\cite{frans2024one, pynadath2025candi}. Existing works attempt to address this issue via customized time warping in training and inference~\cite{ye2023dinoiser, dieleman2022continuous}. The hypothesis underlying our work is that the identities of some tokens are inherently easier to deduce than others, but the imposition of a single global time (aka noise level) prevents this from organically happening. In addition and equally important, the imposition of a global time prevents the algorithm from explicitly differentiating between input prompt and the canvas that needs to be generated. 


Motivated by these problems, we propose \textit{Token Time Continuous Diffusion (TTCD)}. Our method introduces {\bf per-token times}; each token now proceeds from noise to data at different rates, and this also changes its effect on how other tokens denoise. We also separately maintain a nominal global time, which we progress from 0 to 1 in equal steps without warping (and, which is also nominally the average of the per token times).
Additionally, for conditional generation, with per-token times we can assign a fixed time of 1 to all input prompt tokens throughout the denoising process, while evolving the per-token times of the output canvas positions from 0 to 1 (Sec~\ref{subsec:owt_main}). 
Finally, token time also gives the flexibility to control each tokens denoising trajectory, allowing easier (equiv. more sure) tokens to be denoised earlier and harder tokens to be denoised later (Sec~\ref{subsec:sudoku_main}). 
\begin{figure}
    \centering
    \includegraphics[width=\linewidth]{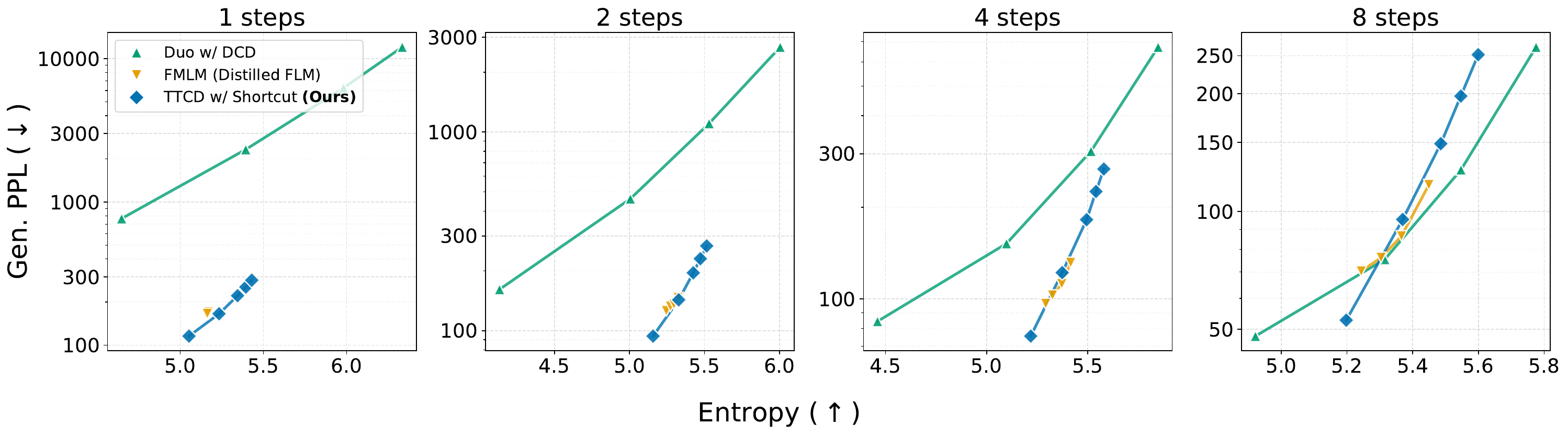}
    \caption{In this figure we evaluate \textbf{unconditional generation} performance of \textbf{distilled diffusion LMs} trained on \textbf{OpenWebText} (see Sec~\ref{subsec:owt_main}). We generate samples from the different methods considered, and plot their generative perplexity (y-axis, lower is better) and text entropy (x-axis, higher is better), for one to eight step generation. 
    Duo w/ DCD is a distilled uniform-noise (discrete) diffusion model and Flow map LM (FMLM) is a distilled continuous diffusion model. Our method is Token-time continuous diffusion (TTCD) distilled via shortcut, and it outperforms (equiv. is more toward the bottom right) the discrete baseline (Duo w/ DCD) for upto four steps, while being competitive with FMLM. Additionally, the results show that even on varying generation parameters, FMLM is restricted to a lower (and smaller) range of text entropy, while TTCD extrapolates to higher entropy.}
    \label{fig:owt_distilled_main}
\end{figure}

\textbf{Contributions:}
\begin{enumerate}
    \item Our work presents a continuous space diffusion language model -- which starts from noise but then deterministically iterates to a final token sequence -- based on the key idea that the noising (and denoising) process should noise (and denoise) tokens at different rates.  
    We formalize this idea by assigning rank variables to each token in the canvas.
    Tokens with higher assigned ranks are denoised earlier (Sec~\ref{subsec:pertokentime}). Ranks are assigned based on per-token entropy after one forward pass (Sec~\ref{subsec:inference}).
    \item We present an architectural change to the widely used adaptive layer norm (adaLN), to allow for flexibility of conditioning the diffusion transformer with per-token times. Importantly, our modification does not introduce new parameters. (Sec~\ref{sec:architecture}).
    \item We show that token-time continuous diffusion (TTCD) achieves 31\% accuracy on solving Sudoku in 2 step generation. Masking based diffusion achieve 11\% on this task and naive continuous-space methods achieve near 0\%(Sec~\ref{subsec:sudoku_main}). We scale TTCD to train a 100M parameter model on OpenWebText. This model (and it's distilled variant) outperforms prior known diffusion LMs for few step generation (Figure~\ref{fig:owt_distilled_main} and  Sec~\ref{subsec:owt_main}).
    \item Further, we show that our model can be easily distilled using self-consistency loss, namely shortcut models for continuous-space flow models (Sec~\ref{sec:distillation_main}). Our distilled model achieves state-of-the-art few step diffusion language model. 
\end{enumerate}

%% file: related_work.tex
\section{Related Work}
There is a rich literature of prior work in continuous-space diffusion for language generation. 
Early works like Diffusion-LM~\cite{li2022diffusion} aim to introduce classifier-based controllable generation at test time for language modeling. These works~\cite{gulrajani2023likelihood} minimized the mean squared error in the embedding space which corresponds to the true ELBO loss. Later work~\cite{dieleman2022continuous} shifted from a mean squared loss to a cross entropy loss with the denoising step parameterized as a convex combination of input embeddings w.r.t. to output token distribution. 
Another line of works explored alternate noising processes, specifically noising in the simplex space, instead of the embedding space, using a Dirichlet distribution (DFM, DDSM)~\cite{,stark2024dirichlet, avdeyev2023dirichlet}. 
Recent works have also explored equipping such probability simplexes with the Fisher-Rao metric ~\cite{davis2024fisher, jocontinuous} to make it a Riemann manifold for getting better ELBO loss. 

From an empirical viewpoint, prior work has also argued for necessity of time warping ~\cite{dieleman2022continuous,ye2023dinoiser, pynadath2025candi,lee2026one}. Time warping methods ensure that the difficulty of the task increases linearly in some metric of interest (e.g., decoding error rate, validation loss, rank of correct token etc). Prior work like CCDD CADD, CANDI~\cite{zhou2025coevolutionary,zheng2025continuously,pynadath2025candi} propose hybrid models which aim to overcome the information loss in the sampling step of discrete models, but are not purely continuous and involve sampling of tokens. 
Finally, prior work has also looked continuous diffusion in a latent embeddings space of language with learnt encoders and decoders ~\cite{lovelace2023latent, lovelace2026stop}
Our work aims to propose a new method for token-space diffusion with emphasis on the necessity of token time.

%% file: method.tex
\section{Method}
\begin{figure*}
\centering
\begin{minipage}{0.45\textwidth}
\centering
\includegraphics[width=\linewidth]{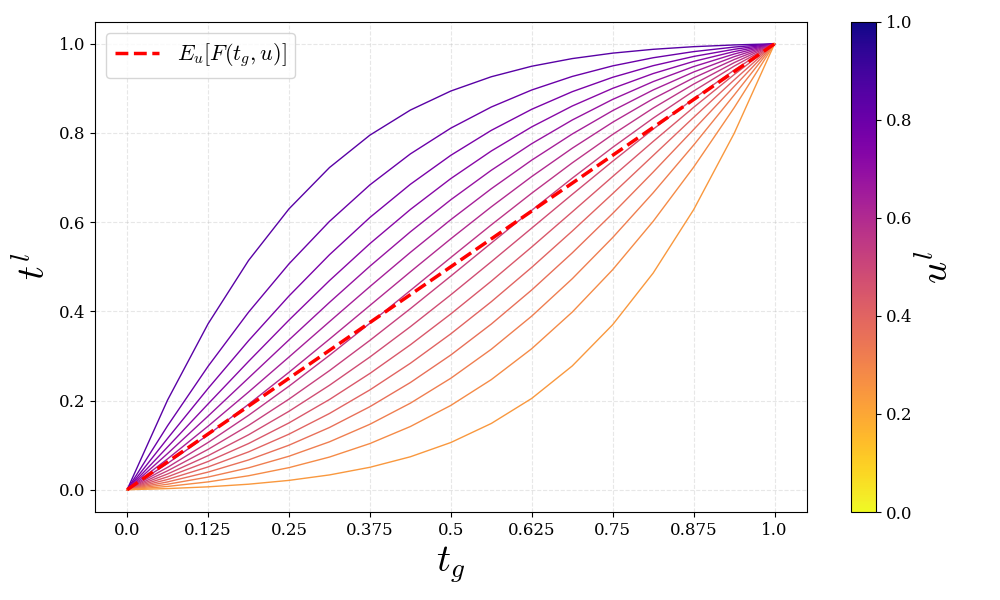}
\end{minipage}
\begin{minipage}{0.52\textwidth}
\centering
\includegraphics[width=\linewidth]{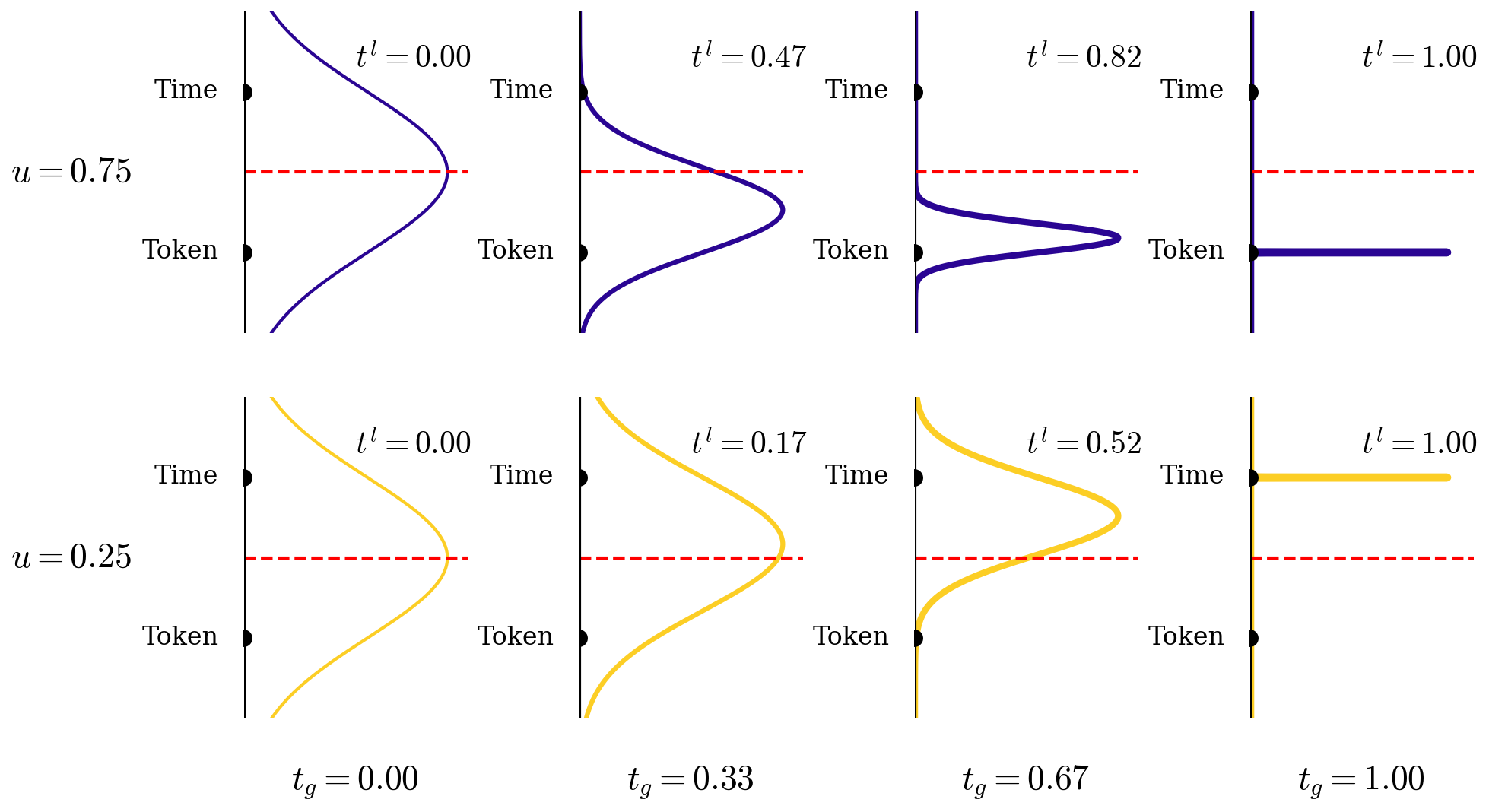}
\end{minipage}
\caption{{\bf Per-token times:}{\em (a) Left}: This shows how local times $t^l$ change as global time $t_g$ varies from 0 to 1, for different ranks $u^l$ (given by colors).\textit{(b) Right}: For a final generated sequence ``token times", this represents the distribution of intermediate vectors $\z$ as a function of global time $t_g$ for given token ranks $u^l$. At $t_g = 0$ they are all the same Gaussian, at $t_g=1$ they are on the respective data token, but in between they interpolate differently based on their ranks.
}
\label{fig:token_time_loss_vs_time}
\end{figure*}

Recall that our task is to develop a model that starts from Gaussian random vectors in $\mathbb{R}^d$, iteratively refines them in continuous space, and in one final step maps them to a string of tokens - such that the resulting distribution of token strings matches the data distribution on which the model is trained.

{\bf Notation.} 
We denote $L$-length token strings as $\x = [x^{1}, x^{2},\ldots, x^{L}]$, and let $\D$ be the
data distribution. Here each token $x^l$ is an element of the vocabulary $\V$. An embedding lookup table $e:\V \rightarrow \mathbb{R}^d$ maps a token $x$ to a vector $z$; we slightly abuse notation and let $e(\x) = [e(x^1),\ldots,e(x^L)]$ denote the set of embeddings for a token string $\x$. We use $\z = [z^1,\ldots,z^L]$ to denote any set of $L$ vectors. Let $q_0$ denote the distribution of $\z$ if each $z^l$ is an independent standard Gaussian $\mathcal{N}(0,I)$ vector in $d$ dimensions; $q_0$ is the ``noise" distribution. Let $[N]$ denote integers between $0$ and $N$. 

{\bf Vanilla flow matching.} Recall that in a basic flow matching setup one considers a data sample $\z_1 = e(\x)$ where $\x \sim \D$, and a noise sample $\z_0 \sim q_0$. Then for any time $t\in (0,1)$ the intermediate sample $\z_t = t\z_1 + (1-t)\z_0$ is made via interpolation. The model is then trained to reconstruct the path to the original data (either $\z_1$, or directly the $\x$ itself) from this intermediate $\z_t$. Note that here all tokens share the same (global) time $t$.

\subsection{Per token times} \label{subsec:pertokentime}

Our approach modifies the above flow matching setup by introducing per-token times; in particular, we now have a global time $\tg$ as well as local per-token times $t^1,\ldots,t^L$. We again choose noise $\z_0 \sim q_0$, and a data string $\x \sim \D$ from which we set $\z_1 = e(\x)$; in addition we also choose a new {\em rank} vector $\mathbf{u} = [u^1,\ldots,u^L]$ where each $u^l$ is chosen iid Unif$(0,1)$. These ranks define the token time. Note that these ranks are chosen only once per data sample, that is, once for the entire noising/denoising path. 

Each token's local time $t^l$ now depends on both the global $t_g$ and its rank $u^l$; in particular
\begin{equation}\label{eqn:local_times}
t^l ~ = ~ F(t_g,u^l) \quad \text{for each token $l$}    
\end{equation}
Each token in the sequence would have it's own rank $u^l$ which gives us the token time $t^l$. Now, at any global time $t_g \in (0,1)$, the intermediate sample is made by interpolating each token using its own local token time, i.e. 
\begin{equation}\label{eqn:noising}
z^l ~ = ~ t^l z^l_1 \, + \, (1-t^l) z^l_0 \quad \text{for each token $l$}
\end{equation}
We detail how we learn a model to reverse the noising process in Sec~\ref{sec:training}.

\textbf{Design choice of $F$.} We would like to select a function $F(\tg,u^l)$ above so that each of the token embeddings interpolates between noise and data; in particular, global time $\tg=0$ corresponds to noise for all, $\tg=1$ corresponds to data for all, and as $\tg$ moves from $0$ to $1$ each token's $t^l$ also monotonically increases (but, different tokens increase at different rates). Additionally, we want tokens with higher ranks to be denoised earlier. This corresponds to higher rank tokens having higher local times for any $\tg$.
Recall also that since the ranks $u^l$ are random Unif$(0,1)$, this means each $t^l$ is a  
random number given $\tg$. We would like to choose $F$ so that the resulting local times are $\tg$ on average. This means we need to choose $F$ so that 
\begin{tasks}(1)
    \task[(A)] $F(0,u) = 0$ and $F(1,u) = 1$ for all $u$. This ensures that all per-token times, independent of rank, start at noise when global $t_g=0$ and all end at data when $t_g=1$.
    \task[(B)] $F(\tg,u)$ monotonically increases in $\tg$ for all $u$. This ensures that the per-token times advance as the global time $t_g$ advances.
    \task[(C)] $F(\tg,u)$ monotonically increases in $u$ for all $t_g$. This ensures that higher rank tokens remain ahead of lower rank tokens as they progress on their journey from noise to final generation. 
    \task[(D)] For $u\sim\mathrm{Unif}(0,1)$, $\mathbb{E}_u[F(\tg,u)] = \tg$ for all $t_g$. This means that the average of per-token times is the global time, at all $t_g$.
\end{tasks}
Our results in this paper correspond to choosing $F$ so that if $u \sim$ Unif$(0,1)$ the resulting random variable $t^l = F(\tg,u)$ follows the distribution Beta$(\frac{1}{1-\tg},\frac{1}{\tg})$. This is formally stated in Lemma~\ref{lemma:choice_of_f} (proof in  Suppl~\ref{subsec:proof_of_lemma}). $F$ is defined as the quantile function of the Beta distribution with appropriate boundary conditions. Figure \ref{fig:token_time_loss_vs_time} (left) depicts example sample paths of how individual per-token times evolve as a function of the global time $\tg$ for different assigned ranks $u^l$. Figure~\ref{fig:token_time_loss_vs_time} (right) shows the distribution of intermediate embeddings as function of global time for a two token string. 
\begin{lemma}\label{lemma:choice_of_f}
Let $F:[0,1]\times[0,1]\to[0,1]$ be defined by
$F(\tg,u) := Q_{\tg}(u)$,
where $Q_{\tg}$ is the quantile function of the $\mathrm{Beta}\!\left(\tfrac{1}{1-\tg},\tfrac{1}{\tg}\right)$ distribution for $\tg\in(0,1)$, with the boundary conventions $F(0,u):=0$ and $F(1,u):=1$. Then $F$ satisfies requirements A,B,C,D above. 
\end{lemma}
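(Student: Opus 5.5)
Throughout, write $a(\tg)=\tfrac{1}{1-\tg}$ and $b(\tg)=\tfrac{1}{\tg}$, so for $\tg\in(0,1)$ both parameters are at least $1$. The key tool is the quantile (inverse-CDF) transform: if $u\sim\mathrm{Unif}(0,1)$, then $Q_{\tg}(u)\sim\mathrm{Beta}(a,b)$. We treat the four requirements in the order (A), (D), (C), (B).

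\textbf{Requirement (A).} This holds by the boundary conventions. As a sanity check, it is also consistent with continuity. As $\tg\to 0$, the mean is
\[
\frac{a}{a+b}=\frac{1/(1-\tg)}{1/(1-\tg)+1/\tg}=\tg .
\]
The variance is
\[
\frac{ab}{(a+b)^2(a+b+1)}=\frac{\tg(1-\tg)}{a+b+1}\to 0 ,
\]
so the distribution concentrates at $0$, and symmetrically at $1$ as $\tg\to1$.

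\textbf{Requirement (D).} For $\tg\in(0,1)$, the quantile transform gives $\mathbb{E}_u[F(\tg,u)]=\mathbb{E}[\mathrm{Beta}(a,b)]=a/(a+b)=\tg$, by the computation above. At the endpoints the expectation is trivially $0$ and $1$.

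\textbf{Requirement (C).} For fixed $\tg\in(0,1)$ we have $a,b\ge 1$, so the Beta density is positive on $(0,1)$. Hence the CDF is continuous and strictly increasing there, and its inverse $Q_{\tg}$ is strictly increasing in $u$. At the boundary values of $\tg$, $F$ is constant in $u$, which is non-decreasing. This is acceptable if "monotone" is read as weak monotonicity.

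\textbf{Requirement (B).} This is the main obstacle. Fix $u$ and show that $\tg\mapsto Q_{\tg}(u)$ is non-decreasing. It suffices to show that the family $\mathrm{Beta}(a(\tg),b(\tg))$ is stochastically increasing in $\tg$. Concretely, for $s<t$ we need $I_x(a(t),b(t))\le I_x(a(s),b(s))$ for every $x$. Then the quantiles are ordered, $Q_s(u)\le Q_t(u)$.

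As $\tg$ increases, $a$ increases and $b$ decreases. We would use the standard fact that the regularized incomplete beta function $I_x(a,b)$ is decreasing in $a$ and increasing in $b$. The cleanest route is the likelihood-ratio order. For $s<t$, the density ratio is
\[
\frac{f_t(x)}{f_s(x)}\propto x^{a(t)-a(s)}(1-x)^{b(t)-b(s)} .
\]
Its exponent on $x$ is positive and its exponent on $(1-x)$ is negative, so the ratio is increasing in $x$. Monotone likelihood ratio implies first-order stochastic dominance, which gives the CDF inequality and hence monotonicity of the quantiles.

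The boundary cases need separate care. Since $F(0,u)=0\le Q_{\tg}(u)\le 1=F(1,u)$, monotonicity extends to all of $[0,1]$. The only delicate point is $u\in\{0,1\}$, where the quantile convention should give $Q(0)=0$ and $Q(1)=1$. These values are constant in $\tg$, so they are trivially monotone.
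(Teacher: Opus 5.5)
Your proposal is correct and follows essentially the same route as the paper's proof. Both use the boundary conventions together with the vanishing mean and variance for (A), the inverse-transform identity and the Beta mean $\tfrac{a}{a+b}=t_g$ for (D), and the strictly increasing CDF for (C); for (B), both show the density ratio $x^{a(t)-a(s)}(1-x)^{b(t)-b(s)}$ is increasing, which gives stochastic dominance and hence ordered quantiles. Your handling of the edge cases $t_g\in\{0,1\}$ and $u\in\{0,1\}$, where only weak monotonicity holds, is slightly more careful than the paper, which asserts strict inequalities without addressing those cases.
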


{\bf Motivation for per-token times.} We now briefly outline the thought process underlying the design choice of having a per-token time. 
(i) At any global noise level the discrepancy in token times causes a fraction of tokens (those at higher token times) to be more easily deducible compared to others.
This would help model training and inference as the model can learn output token distribution (for the lower token time positions) by conditioning on the sure tokens. Learning inter-token dependency is essential for language modeling~\cite{pynadath2025candi}. 
(ii) Downstream evaluation tasks typically involve a given clean input prompt and a blank noisy canvas. Having only one global sequence level time mis-models this discrepancy.  
(iii) In both language and other domains like Sudoku solving etc, the model may be fundamentally more sure of some tokens and less sure of others. Having per token times provides the model flexibility to incorporate that. Sure tokens can take higher ranks and be denoised earlier. 

\subsection{Training} \label{sec:training}

\begin{figure}[t]
\centering
\begin{minipage}{0.48\textwidth}
\begin{algorithm}[H]
\caption{TTCD Training (Sec~\ref{sec:training})}
\label{algo:training}
\begin{algorithmic}[1]
\State \textbf{Input:} Tokens $\mathbf{x} \sim \mathcal{D}$, noise $q_0$, model $(p_\theta, e_{\theta})$
\Repeat
  \Statex \textsc{\textcolor{orange}{Sample $(\z_0,\z_1)$ for interpolation}}
  \State $\mathbf{z}_1 \gets e_{\theta}(\mathbf{x})$ 
  \State $\mathbf{z}_0 \sim q_0$
  \Statex \textsc{\textcolor{orange}{Sampling global time}}
  \State $t_g \sim \text{Unif}(0,1)$ 
  \Statex \textsc{\textcolor{orange}{Randomly assigning ranks}}
  \State $u^1, \dots, u^L \sim \text{Unif}(0,1)$
  \Statex \textsc{\textcolor{orange}{Computing local times}}
  \State $t^l \gets F(t_g, u^l)$ \Comment{Eqn~\ref{eqn:local_times}}
  \State $z^l \gets t^l z^{l}_1 + (1-t^l) z^{l}_0$ \Statex \textsc{\textcolor{orange}{Cross-entropy loss with $\x$}}
\Comment{Eqn~\ref{eqn:noising}}
  \State $\mathcal{L}(\theta) = - \sum_{l=1}^L \log p^l_\theta ( x^l \mid \mathbf{z}, \mathbf{t} )$
  \State Take gradient step on $\nabla_\theta \mathcal{L}(\theta)$
\Until{converged} 
\State \Return $(p_\theta, e_{\theta})$
\end{algorithmic}
\end{algorithm}
\end{minipage}
\hfill
\begin{minipage}{0.48\textwidth}
\begin{algorithm}[H]
\caption{TTCD Generation (Sec~\ref{subsec:inference})}
\label{algo:inference}
\begin{algorithmic}[1]
\State \textbf{Input:} Input prompt $\x_p$, canvas length $L$, steps budget $N$, model $(p_\theta, e_\theta)$
\Statex \textsc{\textcolor{orange}{Init noise and ranks for canvas }}
\State $\z \sim q_0$
\State $u^1,\ldots,u^L \sim $Unif$(0,1)$
\State $\z_{inp} \gets \texttt{concat}[e_\theta(\x_p);\mathbf{z}]$
\State $\boldt \gets $ 1 for prompt, 0 for canvas  \label{fixing_time}
\Statex \textsc{\textcolor{orange}{Entropy-based rank assignment}}
\State $\eta^l \gets H[p^l_\theta (\cdot \mid \z_{inp}, \boldt)]$ \label{entropy_line}
\State $u^1, \dots, u^L \gets$ Sort ranks by entropy $\eta^l$ \label{sorting_line}
\State $\tg, \Delta \gets 0, 1/N$ (global schedule)
\For{$n = 0, \dots, N-1$}
    \Statex \textsc{\textcolor{orange}{Denoise canvas embeddings}}
    \State Compute updated token times $t^l$ (Eqn~\ref{eqn:local_times})
    \State Compute step-sizes $\Delta^l$ (Eqn~\ref{eqn:delta}) 
    \State Denoise $\z$ using Eqn~\ref{eqn:denoising} 
    \State $\tg \gets \tg + \Delta$
\EndFor
\State $\widehat{\mathbf{x}} \gets \text{argmax } p_\theta(\cdot \mid \z_{inp}, \mathbf{1})$
\State \Return $\widehat{\mathbf{x}}$
\end{algorithmic}
\end{algorithm}
\end{minipage}
\end{figure}

Our model is trained to predict the original string $\x$ given the intermediate vectors $\z = [z^1,\ldots,z^L]$ and set of token times $\boldt = [t^1,\ldots,t^L]$; in particular, our loss function for this string is 
\begin{equation}
\mathcal{L}(\theta) ~ = ~ -\, \sum_{l=1}^L \, \log \, p^l_\theta \left ( x^l \, | \, \z \, , \, \boldt \right )
\end{equation}
Here $p_\theta$ represents our model; $\theta$ are the parameters of the model, and $p_\theta^l$ denotes the distribution it predicts for the $l^{th}$ token. Importantly, the model predictions are conditioned on the token times, a $L$ dimensional vectors, instead of a global time. Conditioning on token times helps model discern the clean tokens from the noisy ones. 
In our implementation, this model is a modification of the diffusion transformer~\cite{peebles2023scalable}; we detail how we made this modification below in Sec~\ref{sec:architecture}. We detail the training algorithm in Algo~\ref{algo:training}. Additionally, following prior work~\cite{dieleman2022continuous} we clamp the maximum L2 norm of the embeddings at one, to prevent their norm from blowing up during cross-entropy based training.

\subsection{Inference} \label{subsec:inference}
\textbf{Denoising step.} Recall that for vanilla language flow models trained with cross-entropy loss on ground truth token string~\cite{davis2024fisher, stark2024dirichlet, jocontinuous}, the flow velocity $v$ is parameterized as the difference between the convex combination of the token embeddings and the current embeddings, scaled by $1-\tg$, where $\tg$ is the global time. Performing the denoising step $z \gets z+\Delta \cdot v$ gives the updated token embeddings for global time $\tg+\Delta$, where $\Delta$ represents the step-size.  
In our approach we model each token as proceeding at a different rate, with their token time determined by their rank. Similarly, a global step-size $\Delta$ corresponds to token step-sizes given as
\begin{equation}\label{eqn:delta}
    \Delta^l = F(\tg+\Delta, u^l) -  F(\tg, u^l)
\end{equation}
where $\tg$ is the global time and $u^l$ is the rank. $\Delta^l$ is non-negative following condition (B) in our design choices for $F$. 
Plugging in token time and token step-sizes into the vanilla denoising step gives us the denoising step for our approach as
\begin{equation}\label{eqn:denoising}
    z^l \gets z^l + \frac{\Delta^l}{1-t^l}\left(\sum_{x \in \mathcal{V}} e_\theta(x) \cdot p^l_\theta(x\mid \mathbf{z} , \boldt) -z^l \right)
\end{equation}
where $p^l_\theta$ is the output probability distribution, $\z=[z^1,\ldots,z^L]$ are the current embeddings, and $\boldt=[t^1\ldots,t^L]$ are token times. 
For prompt-conditioned generation, the token time for the input prompt tokens is $1$ (clean). This implies that the prompt embeddings are never updated and we always pass the ground truth prompt token embeddings to the model. 

\textbf{Assigning ranks.} 
Recall that one of the motivation for token time is to allow for flexibility to denoise sure tokens earlier than unsure tokens. 
In our approach, the rank determines the denoising trajectory of each token with higher ranks corresponding to earlier denoising. Naive inference assigns rank to each canvas token uniformly at random from Unif$(0,1)$. 
Our approach assigns rank to the canvas tokens based on the entropy of their output token distribution. Sure tokens gets higher ranks ensuring that they are denoised earlier and less-sure tokens get denoised later. 
Concretely, suppose we are given an input {\em prompt} $\mathbf{x}_p$ and want to generate an output {\em canvas} $\widehat{\mathbf{x}}$. The algorithm is presented in Algo~\ref{algo:inference}. We elaborate here.\\
{\em (1)} We set and fix token times $t^l = 1$ for all tokens $l$ that are in the input prompt (line~\ref{fixing_time}).\\
{\em (2)} Sample noise $\z_0 \sim q_0$ and ranks  $u^l \sim$ Unif$(0,1)$ for the canvas tokens. Do one forward pass of the model, with input times being $t^l=0$ for canvas (equiv. $\tg{=}0$) and $t^l=1$ for prompt (line~\ref{entropy_line}). \\
{\em (3)} Re-order $u^l$ so that the higher rank corresponds to lower entropy of the $p_{\theta}^l$ distribution (line~\ref{sorting_line}). \\
The ranks are now fixed. We use these re-ordered ranks for {\em all} subsequent denoising steps (i.e. do not reorder after every forward pass). 
Only the entropy of $p^l_\theta$ for completely noised canvas determines the rank of the tokens.  
Using the re-ordered ranks we denoise the string with global step-size of $\Delta=1/N$ where $N$ is our generation step budget. 
Note that every denoising step is deterministic given the initial noise $\z_0$ and ranks $u^l$.


\subsection{Distillation with shortcut models}
\label{sec:distillation_main}
Recall that our work aims to study the high speedup setting where discrete diffusion LMs are prone to generate poor quality of text due to the factorization problem. 
Towards this goal we wish to compare the diffusion LMs (discrete and continuous) particularly designed for few step generation. We now describe a self-consistency~\cite{frans2024one} based distillation for our TTCD model. Importantly, since TTCD is a continuous diffusion model we can directly leverage prior work on building consistency model and progressive distillation methods~\cite{frans2024one, boffi2025build, salimans2022progressive}. 

\textbf{Shortcut Models.} After training TTCD model (following Algo~\ref{algo:training}), we further finetune it with ``shortcut'' loss~\cite{frans2024one}. This is done by (1) conditioning our flow model on the token times at the end of denoising update, in addition to the current token times (2) finetuning the model such that the model predicts the ``average flow velocity'' between the start and the end global times. 
This is algorithmically done by first assuming a distribution of pairs of start global times $t_{start}$ and end global times $t_{end} (\geq t_{start})$. We then train the one step flow model prediction conditioned on $\z, (t_{start}, t_{end})$, capturing the average velocity from $t_{start} \to t_{end}$, with a target two-step average velocity. The two step average velocity is the average of model predictions on first moving from $t_{start} \to t_{mid}$ and then to $t_{mid} \to t_{end}$, where $t_{mid}$ is the middle global time. 

Our TTCD model is trained using a cross-entropy loss. We reformulate the shortcut objective as minimizing the KL-Divergence of one-step output probabilities with two-step unrolled output probabilities. 
Also, to ground shortcut model in ground truth data, we additionally have a cross-entropy loss with $\x$ on infinitesimally small step size $t_{end}{-}t_{start} = \epsilon$~\cite{frans2024one}.
The algo is presented in Algo~\ref{algo:shortcut_training}. 
Importantly, the inference algorithm remains the same except it now includes additional conditioning on the end token times. 

\textbf{Implementation details.} 
For training the shortcut model on OpenWebText (Sec~\ref{subsec:owt_main}) we distill it for 50K steps. For the first 40K training steps we have $t_{start}\sim$Unif$(0,1)$ and $t_{end}\sim$Unif$(t_{start}, 1)$. For last 10K steps, we follow FLM~\cite{lee2026one} and first draw the step size $\Delta=t_{end}{-}t_{start}\sim$Unif$(0,1)$ and then draw $t_{start}\sim$Unif$(0, 1-\Delta)$.
In our experiments we found that additionally conditioning the shortcut flow model on the middle token times (instead of just the start and the end token times) helped stabilize the training process. We believe this is because on taking step-size $\Delta$, the rank information for each token is lost, which is alleviated by additionally conditioning on the middle token times.

\subsection{Architectural changes for TTCD} \label{sec:architecture}
Our model makes an architectural change to how time is incorporated into the DiT transformer architecture~\cite{peebles2023scalable}. DiTs are now a standard choice for diffusion LMs (see e.g.~\cite{sahoo2024simple}). The DiT architecture incorporates a global time, shared by all tokens, via an adaptive layernorm (adaLN) which modulates (scales, shifts, gates) the output of the layernorm operation as a function of the global time~\cite{peebles2023scalable}. TTCD adopts the same set of parameters, but instead of using a global time, each token uses it's own local token-time to modulate outputs of the layernorm. Note that this modification doesn't introduce any additional parameters. For conditioning on multiple token times (e.g., start and end token times) for shortcut distillation (Sec~\ref{sec:distillation_main}), we follow prior work~\cite{frans2024one, lee2026one} and concatenate the time embeddings, before downprojecting them and feeding into the adaLN layer. 


%% file: experiments.tex
\section{Experiments}
In this section, we present empirical evaluations of our proposed method TTCD for the algorithmic task of Sudoku~\cite{kim2025train}, OpenWebText based language generation (where we directly compare to several models also trained on OWT), and finally on molecule generation~\cite{schiff2025simple}. 

\subsection{Sudoku}\label{subsec:sudoku_main}
We consider the task of solving $9{\times}9$ Sudoku puzzle, which has been a popular sandbox to understand discrete diffusion ~\cite{shah2024causal,kim2025train}. Approximately twenty-five cells (out of the eighty-one cells) are filled in the input puzzle, and the generative task is to fill in the rest. We study the accuracy of different discrete diffusion methods at solving this task, i.e. accurately generating a valid sudoku board, at low generation steps of two to sixteen. 

\textbf{Methods.} 
The \textit{discrete masking} method is the sudoku-setting avatar of MDLM; it starts with all empty squares being a MASK token, and all initially filled squares as the ``input prompt'' from which it iteratively unmasks and commits. We consider the two standard unmasking strategies: random, and entropy (i.e. lowest entropy tokens chosen for unmasking).
For \textit{continuous time methods}, we consider noising in the one-hot space of tokens with gaussian noise. Here we ablate two choices for token times, namely "w/ sequence-time" -- where all tokens share a single global $t_g$, and ``w/ token-time" where  each token's one-hot is interpolated to it's local token level of noise determined by the choice of per-token-time function $F$ (see Lemma~\ref{lemma:choice_of_f}). For ``w/ token-time" we show results for random assignment of ranks and entropy based assignment of rank (denoted as ``w/ entropy"). 
Additionally, we also ablate with a decoding error based time warping~\cite{lee2026one} denoted as "w/ warping" for the global-only time method (see Suppl. Fig~\ref{fig:sudoku_warp_fn_app}  for warping function). 
For this setup, we consider input-prompt conditional training where we don't add noise (i.e. mask the token or add gaussian noise) to the given input puzzle. For all methods, we train a small six million parameter transformer model for 100 epochs, with learning rate of 1e-3 and we evaluate the best validation checkpoint.

We note that discrete masking naturally differentiates between sudoku cells deemed to be known and which are unknown, but suffers from the factorization problem of parallel sampling. Conversely, the single global time sequence method does not have a factorization problem, but innately (and inaccurately) deems all tokens (inputs and blanks) to be at the same noise level. Our per-token time method allows for both differentiated treatment of tokens and avoidance of the factorization problem.

\textbf{Results.} We present the percentage of correctly solved Sudoku board in Table~\ref{tab:sudoko_main}. The results show that TTCD is competitive with discrete entropy-based unmasking, demonstrating that TTCD is able to correctly rank the tokens based on their surety at $t_g{=}0$. Furthermore, the discrete baseline's performance at extremely low generation steps degrade due to the factorization error (the model independently samples output tokens). TTCD doesn't commit tokens till the final step and hence is able to outperform the discrete baseline. 

\begin{table}[t]
\centering
\setlength{\tabcolsep}{8pt}
\begin{tabular}{lcccc}
\toprule
\multirow{2}{*}{\textbf{Method}} & \multicolumn{4}{c}{\textbf{Generation Steps}} \\
 & \textbf{2} & \textbf{4} & \textbf{8} & \textbf{16}\\
\midrule
 Discrete masking (random) & {0.62} & {2.73} & {8.79} & {20.09} \\
Discrete masking (entropy) & \underline{11.65} & \textbf{68.29} & \textbf{92.90} & \textbf{97.30} \\
Continuous w/ sequence-time & 0.00 & 0.00 & 3.03 & 20.96 \\
Continuous w/ sequence-time w/ warping & 0.00 & 0.16 & 4.69 & 15.04 \\
Continuous w/ token-time  & {1.01} & {22.27} & {36.75} & {37.89}\\
Continuous w/ token-time w/ entropy (TTCD, \textbf{Ours}) & \textbf{31.51} & \underline{61.33} & \underline{65.85} & \underline{68.46}\\
\bottomrule
\end{tabular}
\caption{{\bf Sudoku 9x9 solving:} We report the percentage of $9{\times}9$ Sudoku boards solved for the experimental setup described in Sec~\ref{subsec:sudoku_main}. The results show that at extremely-low NFE regime of 2 generation steps, token-time is meaningfully better than the baselines considered. At higher generation steps, TTCD performs second best (underlined), just behind discrete masking models with entropy based unmasking at inference. 
}
\label{tab:sudoko_main}
\end{table}

\subsection{Language Models Trained on OpenWebText}\label{subsec:owt_main}
\begin{figure}
    \centering
    \includegraphics[width=\linewidth]{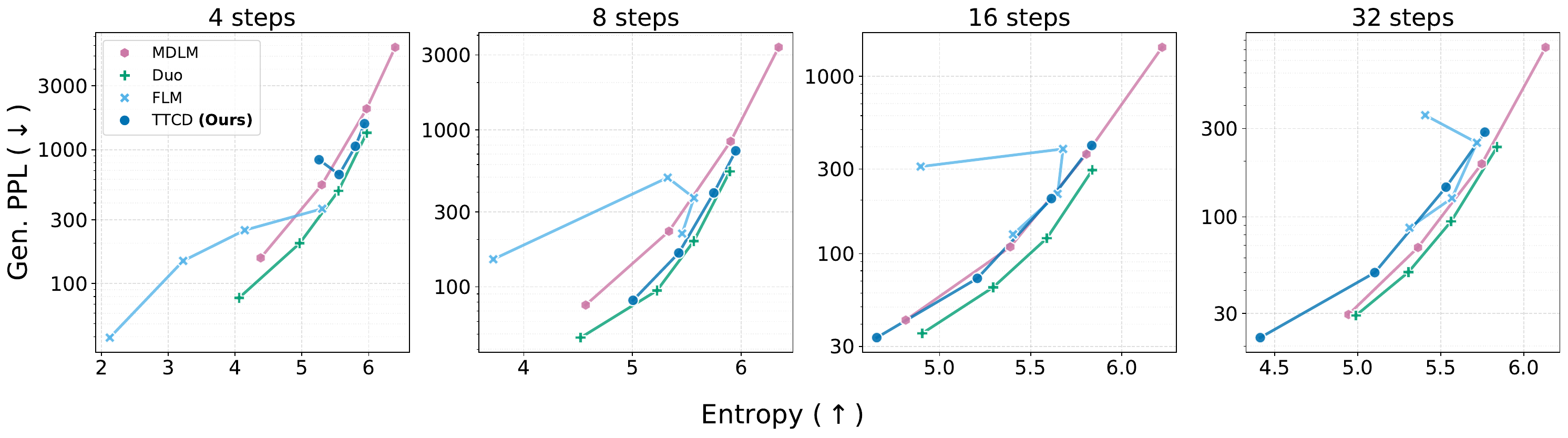}
    \caption{\textbf{Unconditional generation:} In this figure we study the unconditional generation of the different diffusion-based language models trained on the OpenWebText (OWT) dataset (see Sec~\ref{subsec:owt_main}). We compare the perplexity of the generated samples with varying step (forward passes through the diffusion model) budget. The plot shows that the baseline continuous diffusion model's (FLM) entropy on varying it's inference parameters (temperature in the plot) is erratic. For sampling budget less than 16 steps, TTCD closely matched Duo's performance and outperforms MDLM.}
    \label{fig:owt_base_main}
\end{figure}

In this subsection, we present our results on scaling TTCD to 160M parameter model trained on natural language dataset of OpenWebText.

\textbf{Experimental Setup.} We train our TTCD model on the  OpenWebText dataset for 1M steps, following training parameters of prior work~\cite{sahoo2024simple}. 
Given a trained TTCD model, we evaluate it's generative performance under two seperate modes : \textit{unconditional} and \textit{prefix-conditioned}.  Unconditional generation aims to generate the full canvas of 1024 tokens without any prompt conditioning. Prefix-conditioned generation generates the last $K$ (32 and 128 in our experiments) tokens conditioned on a clean prefix (1024-$K$ tokens) of validation set token strings. This evaluation aims to mimic conditional generative performance which is closer to how would be used eventually in downstream tasks. 
For unconditional generation we evaluate the models by first sampling token strings at varying generation step budget. Then the generated strings are using a GPT2-large model, which assigns a perplexity to each generated string (lower is better). Additionally we also report the text entropy of generated string. The entropy captures the diversity of tokens within a string (higher is better). 
For prefix-conditioned generation, generative perplexity and entropy are computed on just the output canvas (conditioned on the input prompts). Qualitative generations for this setup are reported in appendix Sec~\ref{sec:qualitiatve_app}.

\textbf{Methods.} We consider the following baselines. First, for the {\bf non-distilled} models we consider: {\em (a)  MDLM}~\cite{sahoo2024simple} is the masked discrete diffusion model. {\em (b) Duo}~\cite{sahoo2025diffusion} is the uniform-noise discrete diffusion model.
{\em (c) FLM}~\cite{lee2026one} is a recent continuous-space discrete-generation diffusion model, and {\em (d) TTCD \textbf{(ours)}} with unconditional training as described in section \ref{sec:training}. Then, for the {\bf distilled} models (i.e. those that have gone through a second step of further training with some form of self distillation) we consider {\em (a) Duo w/ DCD}~\cite{sahoo2025diffusion}which is the distilled DCD model, {\em (b) FMLM}~\cite{lee2026one} which is the distilled version of FLM, and {\em (c) TTCD (w/ Shortcut) \textbf{(Ours)}} is the distilled version of TTCD as described in Sec~\ref{sec:distillation_main}

\textbf{Gen PPL vs entropy curves.} Prior work~\cite{zheng2024masked, pynadath2025candi} have shown that indexing model performance by just generative perplexity evaluated by an AR model leads to spurious findings. Diffusion LMs often trade-off the entropy of generated text (by repeating words) to achieve a lower generative perplexity. The entropy can be controlled by varying the inference time parameters like (a) for token-prediction models : the softmax temperature for output probabilities $p^l_\theta$, or (b) only for continuous-space models : the initial noise norm, in other words, variance of Gaussian $q_0$. We present our results by varying these inference parameters to get points on the Gen ppl. vs entropy curve. The softmax temperature is chosen from \{0.8, 0.9, 1.0, 1.1\}, while initial noise norm (standard deviation of $q_0$) is from \{0.5, 0.7, 0.9, 1.1\}. A better model is on the bottom right side of the presented curves.

\textbf{Results on unconditional generation} are encapsulated in the caption of Fig~\ref{fig:owt_base_main} (kindly refer to Table~\ref{table:detailed_base_app} for numbers). 
We ablate on choice of parameter (sampling temperature or initial noise norm) to vary for the continuous-space models (FLM, TTCD) in Fig~\ref{fig:owt_noise_v_temp_base_appendix} in the appendix and report the best here.
TTCD based generation is lesser prone to repeating tokens (collapsing entropy) as the generated tokens become sure at different global noise levels. Hence once a token is sure (i.e., higher token time) at a global noise level, it is less likely to be repeated at other token positions still currently at lower token times. 

\textbf{Results on prefix-conditioned generation.} Fig~\ref{fig:owt_conditional32_main} shows evaluations for 32 canvas length.
For TTCD, as detailed in Algo~\ref{algo:inference}, the token-time for the clean prefix is fixed as one. 
The results show that TTCD (w/ shortcut) achieves the best prefix-conditioned generation, demonstrating that inclusion of token-time makes TTCD naturally suited for conditional generation. FLM and Duo condition on one global time which goes from 0 to 1 for both methods. 
FLM performs poorly on the task. The generations from FLM are entropy collapsed (${<}3$ entropy) and out of bounds in four step generation subfigure (see Fig~\ref{fig:owt_conditional_appendix} for complete plots). We report numerical numbers in Sec~\ref{subsec:conditional_app}.

\begin{figure}
    \centering
    \begin{minipage}[t]{0.48\textwidth}
        \vspace{0pt}
        \centering
        \includegraphics[width=\linewidth]{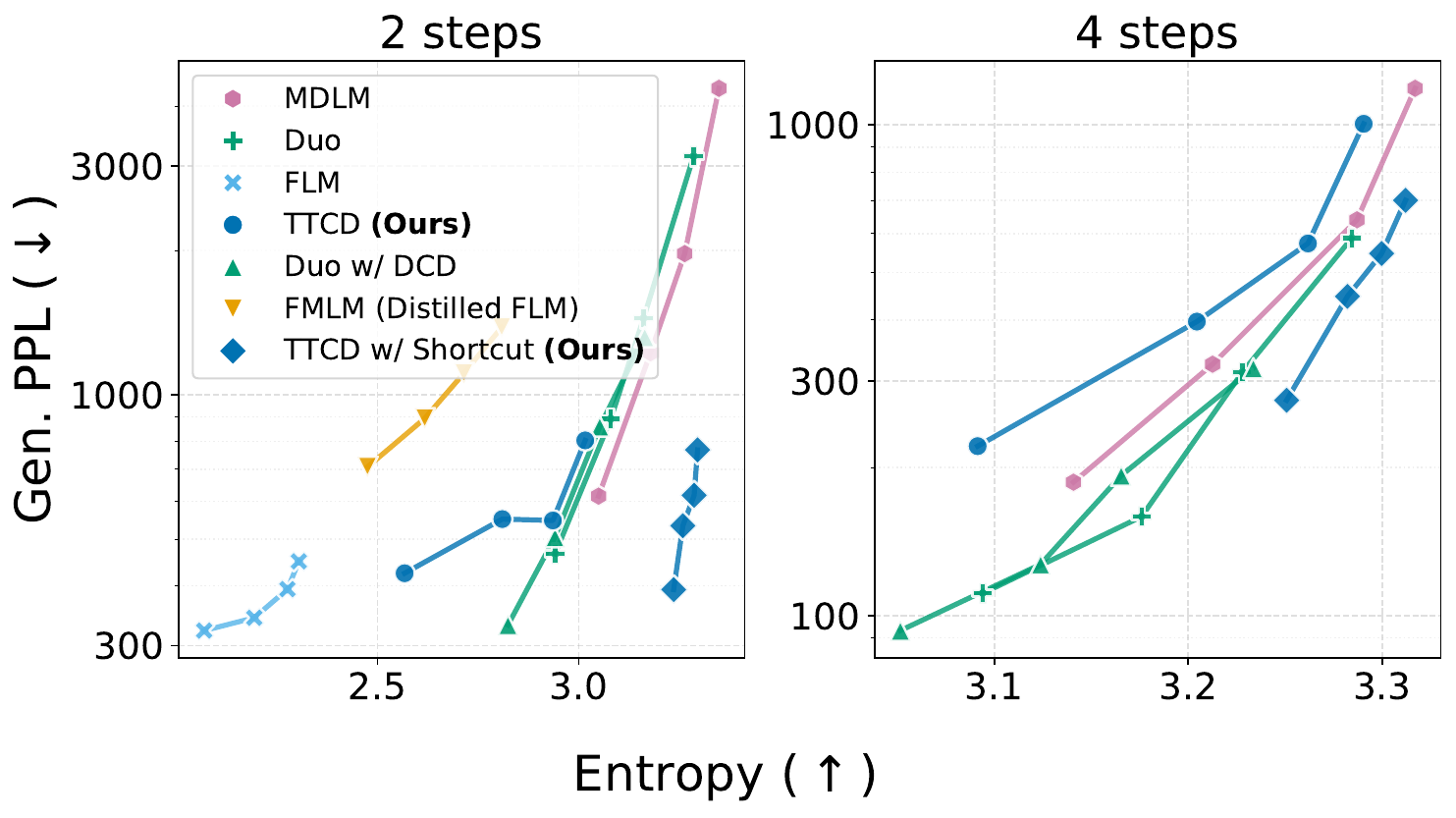}
    \end{minipage}%
    \hfill
    \begin{minipage}[t]{0.48\textwidth}
        \vspace{0pt}
        \footnotesize
        \begin{tcolorbox}[colback=blue!5, colframe=blue!60!black, boxrule=0.4pt, arc=1mm, left=3pt, right=3pt, top=1pt, bottom=1pt, boxsep=1pt]
            \textbf{Prompt:} [omitted]. On a console known for 3D games, Symphony of the Night shined as a model of what 2D could be
        \end{tcolorbox}
        \begin{tcolorbox}[colback=green!5, colframe=green!50!black, boxrule=0.4pt, arc=1mm, left=3pt, right=3pt, top=1pt, bottom=1pt, boxsep=1pt]
        \textbf{Duo w/ DCD:} 
        2D games;Play Theft of II’s Fandals 1978: Return of the Outrider’s Master,<|endoftext|>
        \end{tcolorbox}
        \begin{tcolorbox}[colback=purple!5, colframe=purple!60!black, boxrule=0.4pt, arc=1mm, left=3pt, right=3pt, top=1pt, bottom=1pt, boxsep=1pt]
            \textbf{TTCD w/ Shortcut (Ours):} done two years ago. U Castlevania is able to subscribe to remake of the 3D controllers to upgrade play demos.<|endoftext|>The 3D government imported<|endoftext|>
        \end{tcolorbox}
    \end{minipage}
    \caption{\textbf{Prefix-conditioned generation on canvas of 32 length}. \textit{(a) Left:} We evaluate different models on conditional generation of a canvas of 32 tokens (see Sec~\ref{subsec:owt_main}).
    The plots demonstrate the efficacy of TTCD (w/ shortuct) over the baselines diffusion models for two (16x speedup) and four (8x speedup) generation steps. \textit{(b) Right:} The figure shows a qualitative example of the prefix-conditioned generation for Duo w/ DCD and TTCD (w/ shortcut) at 4 generation steps.}
    \label{fig:owt_conditional32_main}
    \vspace{-2em}
\end{figure}

\begin{wrapstuff}[r, width=0.50\linewidth, type=figure]
    \centering
    \includegraphics[width=0.8\linewidth]{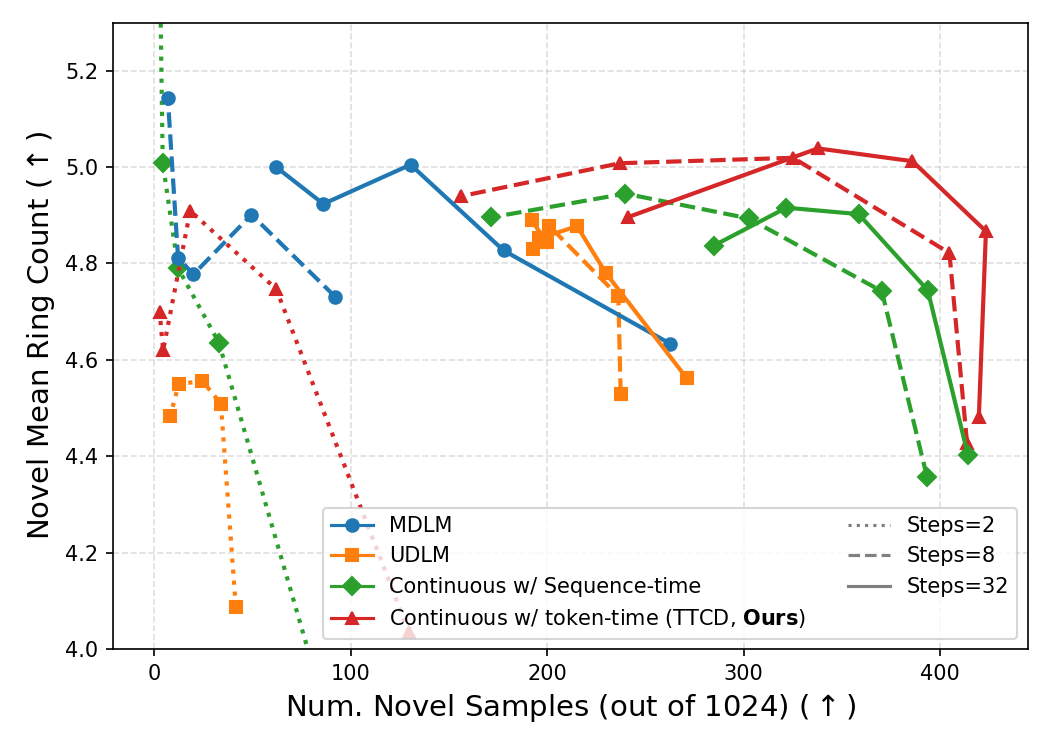}
    \caption{Classifier-free guidance on QM9}
    \label{fig:guidance_ringcount_main}
\end{wrapstuff}

\subsection{Classifier-free guidance}\label{subsec:classifier_main}
We evaluate classifier-free guidance on the molecule data QM9 ~\cite{schiff2025simple}. The methods are trained on the QM9 dataset following prior work set of training hyper parameters ~\cite{schiff2025simple}. The baseline methods are (a) \textit{UDLM} is a discrete uniform noising model and (b) \textit{MDLM} is masked discrete diffusion model. We consider two continuous diffusion baselines (c) \textit{Continuous w/ sequence time} and (d) \textit{Continuous w/ token-time \textbf{(Ours)}} (see Sec~\ref{subsec:sudoku_main} for details on continuous methods).  

\textbf{Results} are reported in Fig~\ref{fig:guidance_ringcount_main}. The plot here shows the mean of ring counts for novel molecules (on the y-axis) against the number of novel molecules out of 1024 generated molecules. Continuous-space diffusion is able to generate more valid molecules compared to discrete baselines on this task. Incorporation of token-time (TTCD) helps push the frontier further.
The different points on curve correspond to different guidance levels varying from one to five. We plot the results for different number of steps (denoted as steps in the figure). 
We also report mean QED numbers on this dataset in appendix Fig~\ref{fig:guidance_qed_appendix}.

%% file: conclusion.tex
\section{Conclusion, Limitations and Future work} 
\label{sec:conclusion}
\textbf{Conclusion}. We presented TTCD, a continuous-state diffusion LM, with the key differentiator being the inclusion of token-time apart from the global sequence level time. The inclusion of token-time helps our method be significantly more robust of inference-time parameters and naturally allows for prefix-conditioned generation. TTCD outperforms the best known discrete method for high speed settings on both unconditional and prefix-conditioned generation. 
\textbf{Limitations.} Our results are limited to 100M parameter scale setting. Scaling TTCD to 1B parameter will help compare to multi-token predictor methods for autoregressive models which is currently missing. 
\textbf{Future work.} The notion of token-time can be extended to uniform noising discrete models. This can potentially help prefix-conditioned for these models. 

%% file: appendix.tex
\input{app_theoretical_results}
\clearpage
\input{app_additional_details}
\clearpage
\input{app_quantitative_results}
\clearpage
\input{app_qualitative_results}
\clearpage
\section{Compute Requirements}\label{app:compute}

\textbf{OpenWebText} All are experiments are done on NVIDIA H100 GPUs. Training the TTCD model on OpenWebText took 2 days on 32 H100s. 

\textbf{Sudoku and QM9} These experiments were done on NVIDIA A40 and needed around 10 hours for each training run on a single A40 GPU. 

%% file: app_theoretical_results.tex
\section{Theoretical Results}
\subsection{Proof of Lemma~\ref{lemma:choice_of_f}}
\label{subsec:proof_of_lemma}

\begin{proof}
For $\tg\in(0,1)$, write
$$
\alpha(\tg) \;:=\; \frac{1}{1-\tg}, \qquad \beta(\tg) \;:=\; \frac{1}{\tg}
$$
so that $F(\tg,\cdot) = Q_{\tg}$ is the quantile function of $\mathrm{Beta}(\alpha,\beta)$ (we omit dependence on $\tg$ when it's apparent from context). We verify the four requirements in turn.

\medskip
\noindent\textit{(A)} The identities $F(0,u)=0$ and $F(1,u)=1$ hold by the assumed boundary conventions. 

We now show that assumed convention is consistent with the continuous limits of the Beta family. The mean and variance of $\mathrm{Beta}(\alpha,\beta)$ are simplified as : 
$$
\mu(\tg)=\frac{\alpha}{\alpha+\beta} = \tg \qquad \sigma^{2}(\tg)=\frac{\alpha\beta}{(\alpha+\beta)^2(\alpha+\beta+1)}=\frac{\tg^2(1-\tg)^2}{\tg(1-\tg)+1}
$$
As $\tg \to 0^+$ we have both $\mu(\tg) \to 0^+$ and $\sigma^2(\tg) \to 0^+$. Since both mean and variance approach $0$, every quantile of the distribution also approaches to 0. Hence, $Q_{\tg}(u)\to 0$ for every $u\in(0,1)$. The limit for $\tg \to 1^-$ can be symmetrically derived.

\medskip
\noindent\textit{(C)} This holds by the fact that the CDF of the Beta distribution is strictly increasing, and hence values at higher quantiles are strictly greater than values at lower quantiles. 

\medskip
\noindent\textit{(D)} For $u\sim\mathrm{Unif}(0,1)$ we know that $Q_{\tg}(u)\sim\mathrm{Beta}(\alpha,\beta)$. This follows from inverse-transform identity. Simply stated, for a distribution, doing CDF (of that distribution) inverse of a uniform random variable gives samples from the distribution. 
Hence $\mathbb{E}_{u}\!\left[F(\tg,u)\right]$ is equal to the mean of the Beta distribution.
Using the formula for the mean of Beta distribution derived for part (A) we have $\mu(\tg) = \tg$.

\medskip
\noindent\textit{(B)} We first consider the likelihood ratio of the two Beta densities,
$$
\frac{f_{\alpha(\tg'),\beta(\tg')}(x)}{f_{\alpha(\tg),\beta(\tg)}(x)} \;\propto\; \frac{x^{\alpha(\tg')-\alpha(\tg)}}{(1-x)^{\beta(\tg)-\beta(\tg')}}, \qquad x\in(0,1).
$$
Since $\alpha$ is strictly increasing and $\beta$ is strictly decreasing in $\tg$, we have $\alpha(\tg')-\alpha(\tg)>0$ and $\beta(\tg)-\beta(\tg')>0$. Hence for increasing $x$, the numerator is increasing and the denominator is decreasing. Hence the likelihood ratio is strictly increasing in $x$. The Beta family thus has a strict monotone likelihood ratio in $\tg$, which implies strict stochastic dominance:
$$
\mathrm{Beta}(\alpha(\tg),\beta(\tg)) \;\prec_{\mathrm{st}}\; \mathrm{Beta}(\alpha(\tg'),\beta(\tg')).
$$
The stochastic dominance property implies that $F_{\tg}(x)>F_{\tg'}(x)$ for every $x\in(0,1)$, where $F_{\tg}$ is the CDF of Beta$(\alpha(\tg),\beta(\tg))$ (we have overridden the notation for $F$ to be both the original function and the CDF).
Since $Q_{\tg} = F^{-1}_{\tg}$, we have $Q_{\tg}(u)<Q_{\tg'}(u)$ and hence $F(\tg,u)<F(\tg',u)$. 

\medskip

Properties (A)--(D) are thus established.
\end{proof}

%% file: app_additional_details.tex
\section{Additional Details}

\subsection{Shortcut Details}
We present the shortcut training algorithm in Algo~\ref{algo:shortcut_training}. The inference is the same as standard inference with extra token time conditioning for $g_\phi$.
\begin{algorithm}[H]
\caption{TTCD Distillation using Shortcut (Sec~\ref{sec:distillation_main})}
\label{algo:shortcut_training}
\begin{algorithmic}[1]
\State \textbf{Input:} Token strings $\x \sim \mathcal{D}$, gaussian noise distribution $q_0$, trained TTCD ($p_\theta$, $e_{\theta}$), time distribution $\mathcal{T}(t_{start},{t_{end}})$.
\Statex \textsc{\textcolor{orange}{${g}_\phi$ and $p_\theta$ have the same set of parameters, but ${g}_\phi$ concats token times}}
\Statex \textsc{\textcolor{orange}{${g}_\phi$ takes concat of three token times for training stability}}
\State  $({g}_\phi, {e}_{\phi}) \gets (p_\theta, e_{\theta})$
\Repeat
  \State $\z_1 \gets e_{\phi}(\x)$ 
  \State $t_{start}, t_{end} \sim \mathcal{T}$ 
  \State $t_{mid} \gets (t_{start}+ t_{end})/2$
  \State $t_{mid,end} \gets (t_{mid}+ t_{end})/2$
  \State $t_{start,mid} \gets (t_{start}+ t_{mid})/2$
  \State $u^1, u^2, \ldots, u^L \sim \text{Unif}(0,1)$ 
\Statex \textsc{\textcolor{orange}{Noise $\z_1$ to $t_{start}$ noise level. See Eqn~\ref{eqn:noising}.}}
  \State $\z \gets \mathtt{NoisingProcess}(\z_1, t_{start}, [u_0,u_1,\ldots,u_L])$ 
\Statex \textsc{\textcolor{orange}{Denoising update to get  $\z_{mid}$ at global time $t_{mid}$. Use $g_\phi$ for denoising in Eqn~\ref{eqn:denoising}.}}
  \State $\z_{mid} \gets \mathtt{DenoisingStep}(\z, t_{start}, t_{mid}, [u_0,u_1,\ldots,u_L])$ 
\Statex \textsc{\textcolor{orange}{Target logits are two step unroll, first got $\z_{mid}$ and then target logits}}
\Statex \textsc{\textcolor{orange}{$\boldt_a$ represent token times with ranks $u^l$ and global time $t_a$}}
  \State $\mathrm{target} \gets g_\phi \left ( \cdot \, | \, \z_{mid} \, , \, (\boldt_{mid}, \boldt_{mid, end}, \boldt_{end}) \right )$
\Statex \textsc{\textcolor{orange}{Consistency loss with stopgrad on target}}
  \State $\mathcal{L}(\phi) ~ = ~ \mathrm{KL} [\texttt{stopgrad} \left(\mathrm{target}\right ) \mid\mid g_\phi \left ( \cdot \, | \, \z \, , \, (\boldt_{start}, \boldt_{mid}, \boldt_{end}) \right )]$
\Statex \textsc{\textcolor{orange}{Token string prediction on infinitesimally small step size}}
  \State $\mathcal{L}(\phi) ~ = ~ \mathcal{L}(\phi) - \sum_{l=1}^L \log g^l_\phi \left ( x^l \, | \, \z \, , \, (\boldt_{start}, \boldt_{start}+\epsilon, \boldt_{start}+2\epsilon) \right )$
  \State Take gradient step on $\nabla_\phi \mathcal{L}(\phi)$
\Until{converged}
\end{algorithmic}
\end{algorithm}

\subsection{Sudoku Experimental Details}

\paragraph{Global-only time continuous time warping} We follow ~\cite{lee2026one} to use decoding error rate as the time warping function. The warping function used is depicted in Fig~\ref{fig:sudoku_warp_fn_app} 

\begin{figure}[h]
    \centering
    \includegraphics[width=0.50\linewidth]{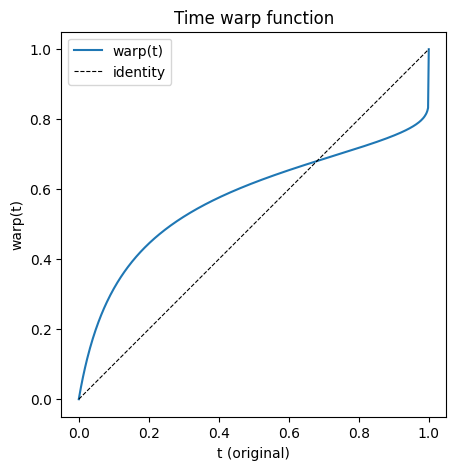}
    \caption{Warping function used for global-only time continuous sudoku model's training and inference. This follows ~\cite{lee2026one} design choice of decoding error rate with a vocab size of 10}
    \label{fig:sudoku_warp_fn_app}
\end{figure}

%% file: app_quantitative_results.tex
\section{Additional Quantitative Results}

\subsection{Unconditional Generation}
Here we plot sampling parameters (sampling temperature and initial noise norm) in Fig~\ref{fig:owt_noise_v_temp_base_appendix}.
Next, we present the quantitative numbers (depicted as figures in the main text) for the \textbf{non-distilled model} for the setup in Sec~\ref{subsec:owt_main} in Table~\ref{table:detailed_base_app} and for the \textbf{distilled model} in Table~\ref{table:detailed_distilled_app}.

\begin{figure}[H]
    \centering
    \includegraphics[width=\linewidth]{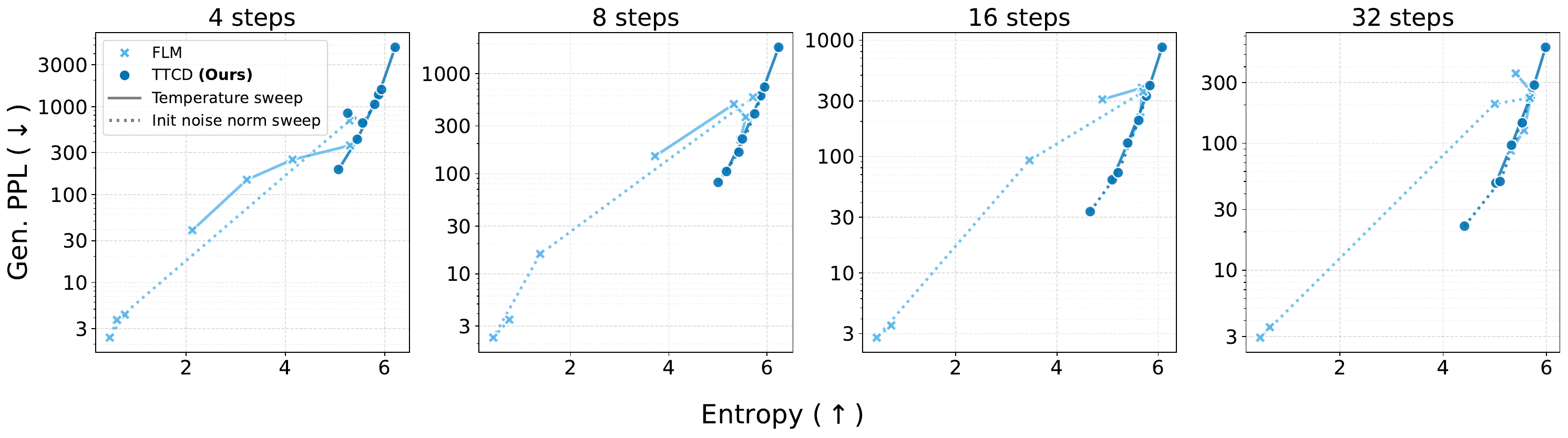}
    \includegraphics[width=\linewidth]{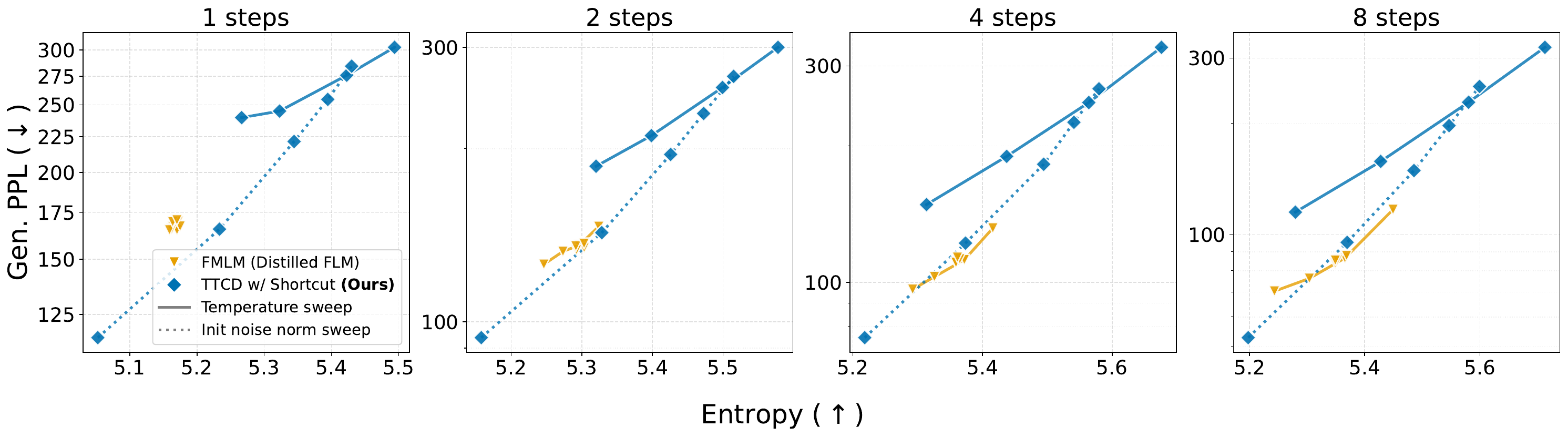}
    \caption{In this figure we evaluate the continuous method on varying the two sampling parameters which vary the entropy of generation : sampling temperature and the initial noise norm. We follow the setup of Sec~\ref{subsec:owt_main}. FLM depicts erratic behaviour with varying sampling parameters while TTCD seems more stable. }
    \label{fig:owt_noise_v_temp_base_appendix}
\end{figure}

\begin{table*}[t]
\centering
\setlength{\tabcolsep}{3pt}
\begin{tabular}{lllrrrrrrrr}
\toprule
& & & \multicolumn{2}{c}{Steps=4} & \multicolumn{2}{c}{Steps=8} & \multicolumn{2}{c}{Steps=16} & \multicolumn{2}{c}{Steps=32} \\
\cmidrule(lr){4-5} \cmidrule(lr){6-7} \cmidrule(lr){8-9} \cmidrule(lr){10-11}
\textbf{Model} &  & \textbf{Value} & \textbf{PPL} & \textbf{Ent} & \textbf{PPL} & \textbf{Ent} & \textbf{PPL} & \textbf{Ent} & \textbf{PPL} & \textbf{Ent} \\
\midrule
\multirow{11}{*}{\textbf{TTCD}} & \multirow{4}{*}{Temp} & 0.8 & 193.18 & 5.07 & 105.26 & 5.17 & 63.14 & 5.09 & 48.54 & 5.02 \\
 &  & 0.9 & 425.48 & 5.45 & 222.56 & 5.50 & 130.57 & 5.40 & 96.45 & 5.32 \\
 &  & 1.0 & 1364.57 & 5.88 & 601.00 & 5.87 & 330.37 & 5.76 & 227.32 & 5.68 \\
 &  & 1.1 & 4719.08 & 6.21 & 1834.24 & 6.24 & 868.91 & 6.08 & 569.94 & 5.98 \\
 & \multirow{7}{*}{Noise} & 0.5 & 841.45 & 5.26 & 81.99 & 5.01 & 33.59 & 4.66 & 22.21 & 4.41 \\
 &  & 0.6 & 704.11 & 5.44 & 115.42 & 5.26 & 49.74 & 5.02 & 34.07 & 4.85 \\
 &  & 0.7 & 652.05 & 5.56 & 164.26 & 5.43 & 72.54 & 5.21 & 49.93 & 5.10 \\
 &  & 0.8 & 856.19 & 5.69 & 277.72 & 5.63 & 132.61 & 5.46 & 89.30 & 5.33 \\
 &  & 0.9 & 1060.39 & 5.80 & 396.17 & 5.75 & 204.36 & 5.61 & 144.71 & 5.53 \\
 &  & 1.0 & 1366.93 & 5.89 & 579.48 & 5.87 & 318.33 & 5.75 & 219.01 & 5.66 \\
 &  & 1.1 & 1564.40 & 5.94 & 734.70 & 5.95 & 407.17 & 5.84 & 287.05 & 5.77 \\
\midrule
\multirow{4}{*}{\textbf{DUO}} & \multirow{4}{*}{Temp} & 0.8 & 78.23 & 4.06 & 47.42 & 4.52 & 35.53 & 4.91 & 29.30 & 4.99 \\
 &  & 0.9 & 199.92 & 4.97 & 94.62 & 5.23 & 64.58 & 5.29 & 50.25 & 5.31 \\
 &  & 1.0 & 493.11 & 5.55 & 195.47 & 5.56 & 122.20 & 5.59 & 94.46 & 5.56 \\
 &  & 1.1 & 1339.26 & 5.97 & 542.04 & 5.90 & 296.12 & 5.84 & 238.71 & 5.84 \\
\midrule
\multirow{11}{*}{\textbf{FLM}} & \multirow{4}{*}{Temp} & 0.8 & 362.22 & 5.30 & 218.25 & 5.46 & 128.41 & 5.40 & 87.33 & 5.31 \\
 &  & 0.9 & 250.35 & 4.15 & 368.08 & 5.57 & 217.20 & 5.65 & 126.19 & 5.57 \\
 &  & 1.0 & 147.68 & 3.22 & 496.20 & 5.32 & 389.26 & 5.68 & 251.66 & 5.72 \\
 &  & 1.1 & 39.41 & 2.13 & 149.70 & 3.72 & 309.56 & 4.90 & 354.51 & 5.41 \\
 & \multirow{7}{*}{Noise} & 0.5 & 4.33 & 0.77 & 3.48 & 0.75 & 3.53 & 0.73 & 3.55 & 0.64 \\
 &  & 0.6 & 2.58 & 0.54 & 2.60 & 0.58 & 2.57 & 0.57 & 2.95 & 0.59 \\
 &  & 0.7 & 2.38 & 0.46 & 2.30 & 0.43 & 2.77 & 0.44 & 2.94 & 0.46 \\
 &  & 0.8 & 2.42 & 0.38 & 2.45 & 0.45 & 5.75 & 0.93 & 15.52 & 1.56 \\
 &  & 0.9 & 3.78 & 0.61 & 15.78 & 1.38 & 92.75 & 3.46 & 204.14 & 5.00 \\
 &  & 1.0 & 97.63 & 2.93 & 434.03 & 5.14 & 388.74 & 5.68 & 232.91 & 5.71 \\
 &  & 1.1 & 694.38 & 5.29 & 580.16 & 5.71 & 360.46 & 5.70 & 227.13 & 5.68 \\
\midrule
\multirow{4}{*}{\textbf{MDLM}} & \multirow{4}{*}{Temp} & 0.8 & 155.42 & 4.38 & 76.49 & 4.57 & 42.18 & 4.81 & 29.63 & 4.94 \\
 &  & 0.9 & 546.49 & 5.30 & 225.84 & 5.34 & 109.12 & 5.39 & 68.27 & 5.36 \\
 &  & 1.0 & 2023.80 & 5.97 & 843.87 & 5.90 & 363.99 & 5.81 & 193.46 & 5.75 \\
 &  & 1.1 & 5824.58 & 6.40 & 3354.07 & 6.35 & 1456.95 & 6.22 & 825.74 & 6.13 \\
\bottomrule
\end{tabular}
\caption{Generative PPL and Entropy for each model, for various sampling parameters.}
\label{table:detailed_base_app}
\end{table*}

\begin{table*}[t]
\centering
\setlength{\tabcolsep}{3pt}
\begin{tabular}{lllrrrrrrrr}
\toprule
& & & \multicolumn{2}{c}{Steps=4} & \multicolumn{2}{c}{Steps=8} & \multicolumn{2}{c}{Steps=16} & \multicolumn{2}{c}{Steps=32} \\
\cmidrule(lr){4-5} \cmidrule(lr){6-7} \cmidrule(lr){8-9} \cmidrule(lr){10-11}
\textbf{Model} & & \textbf{Value} & \textbf{PPL} & \textbf{Ent} & \textbf{PPL} & \textbf{Ent} & \textbf{PPL} & \textbf{Ent} & \textbf{PPL} & \textbf{Ent} \\
\midrule
\multirow{11}{*}{\textbf{TTCD w/ Shortucut}} & \multirow{4}{*}{Temp} & 0.8 & 148.41 & 5.31 & 115.11 & 5.28 & 92.31 & 5.25 & 78.05 & 5.22 \\
 &  & 0.9 & 189.61 & 5.44 & 157.78 & 5.43 & 129.10 & 5.40 & 111.96 & 5.38 \\
 &  & 1.0 & 249.20 & 5.56 & 227.93 & 5.58 & 196.43 & 5.57 & 170.00 & 5.56 \\
 &  & 1.1 & 329.89 & 5.68 & 320.76 & 5.71 & 290.99 & 5.72 & 260.84 & 5.71 \\
 & \multirow{7}{*}{Noise} & 0.5 & 138.38 & 5.41 & 108.52 & 5.40 & 86.66 & 5.38 & 72.93 & 5.36 \\
 &  & 0.6 & 149.01 & 5.42 & 120.81 & 5.43 & 97.72 & 5.41 & 82.10 & 5.38 \\
 &  & 0.7 & 168.72 & 5.46 & 141.88 & 5.46 & 113.92 & 5.45 & 98.52 & 5.43 \\
 &  & 0.8 & 206.34 & 5.52 & 176.46 & 5.53 & 145.06 & 5.50 & 124.03 & 5.49 \\
 &  & 0.9 & 225.43 & 5.54 & 197.10 & 5.55 & 169.36 & 5.54 & 146.88 & 5.53 \\
 &  & 1.0 & 244.09 & 5.56 & 225.91 & 5.58 & 192.50 & 5.56 & 168.98 & 5.54 \\
 &  & 1.1 & 255.72 & 5.57 & 233.08 & 5.58 & 201.67 & 5.57 & 178.95 & 5.56 \\
\midrule
\multirow{4}{*}{\textbf{DUO w/ DCD}} & \multirow{4}{*}{Temp} & 0.8 & 84.43 & 4.46 & 48.04 & 4.92 & 34.74 & 5.12 & 29.95 & 5.14 \\
 &  & 0.9 & 152.12 & 5.10 & 75.44 & 5.32 & 52.54 & 5.35 & 43.56 & 5.37 \\
 &  & 1.0 & 305.75 & 5.52 & 127.83 & 5.55 & 81.26 & 5.56 & 70.46 & 5.55 \\
 &  & 1.1 & 672.12 & 5.85 & 262.99 & 5.78 & 153.00 & 5.73 & 133.47 & 5.75 \\
\midrule
\multirow{11}{*}{\textbf{FMLM}} & \multirow{4}{*}{Temp} & 0.8 & 96.52 & 5.29 & 70.39 & 5.24 & 51.83 & 5.16 & 36.40 & 5.00 \\
 &  & 0.9 & 102.86 & 5.33 & 76.23 & 5.30 & 55.47 & 5.22 & 40.52 & 5.10 \\
 &  & 1.0 & 112.15 & 5.37 & 86.52 & 5.37 & 65.81 & 5.30 & 45.54 & 5.17 \\
 &  & 1.1 & 131.68 & 5.42 & 116.97 & 5.45 & 87.15 & 5.42 & 61.86 & 5.34 \\
 & \multirow{7}{*}{Noise} & 0.5 & 110.43 & 5.36 & 87.81 & 5.37 & 64.56 & 5.29 & 45.23 & 5.17 \\
 &  & 0.6 & 113.75 & 5.38 & 87.71 & 5.36 & 64.87 & 5.30 & 43.83 & 5.17 \\
 &  & 0.7 & 110.55 & 5.36 & 83.75 & 5.35 & 64.61 & 5.30 & 46.09 & 5.19 \\
 &  & 0.8 & 116.22 & 5.38 & 87.50 & 5.36 & 63.85 & 5.30 & 45.49 & 5.19 \\
 &  & 0.9 & 112.56 & 5.36 & 85.66 & 5.35 & 62.87 & 5.29 & 46.36 & 5.19 \\
 &  & 1.0 & 113.01 & 5.38 & 85.94 & 5.35 & 63.96 & 5.29 & 45.25 & 5.17 \\
 &  & 1.1 & 113.48 & 5.36 & 85.35 & 5.35 & 62.94 & 5.28 & 45.57 & 5.18 \\
\bottomrule
\end{tabular}
\caption{Generative PPL and Entropy for distilled models.}
\label{table:detailed_distilled_app}
\end{table*}


\subsection{Conditional Generation}
\label{subsec:conditional_app}

For conditional generation we consider generating last 32 tokens, or last 128 tokens. For both settings we present figures and tables here. For the figure see Fig~\ref{fig:owt_conditional_appendix}. And for the table refer Table~\ref{table:detailed_conditional32_app},~\ref{table:detailed_conditional128_app}

\begin{figure}[h]
    \centering
    \includegraphics[width=\linewidth]{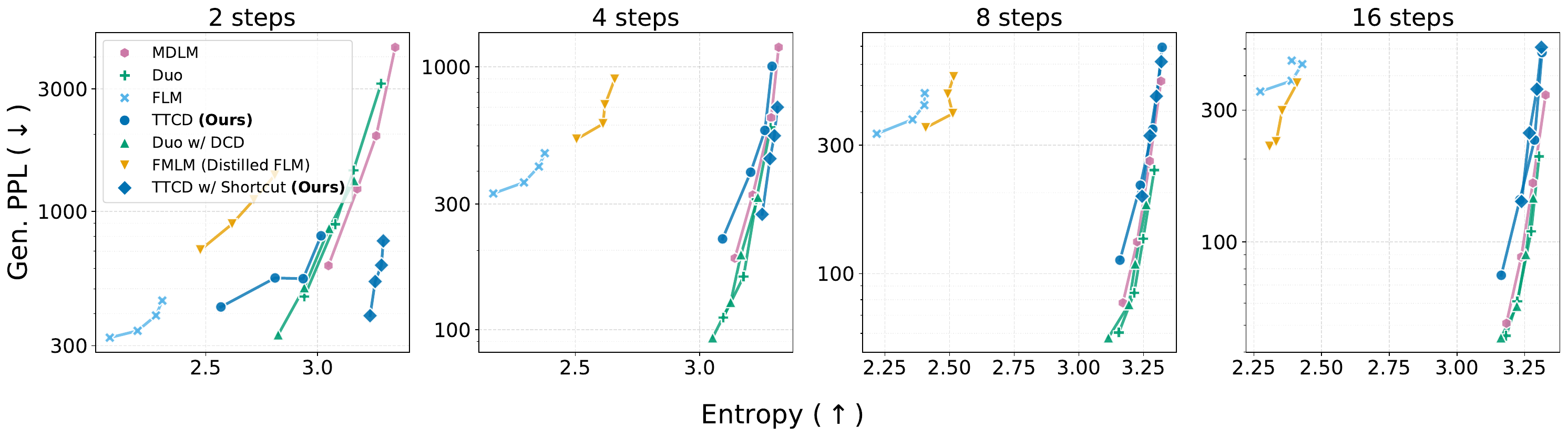}
    \includegraphics[width=\linewidth]{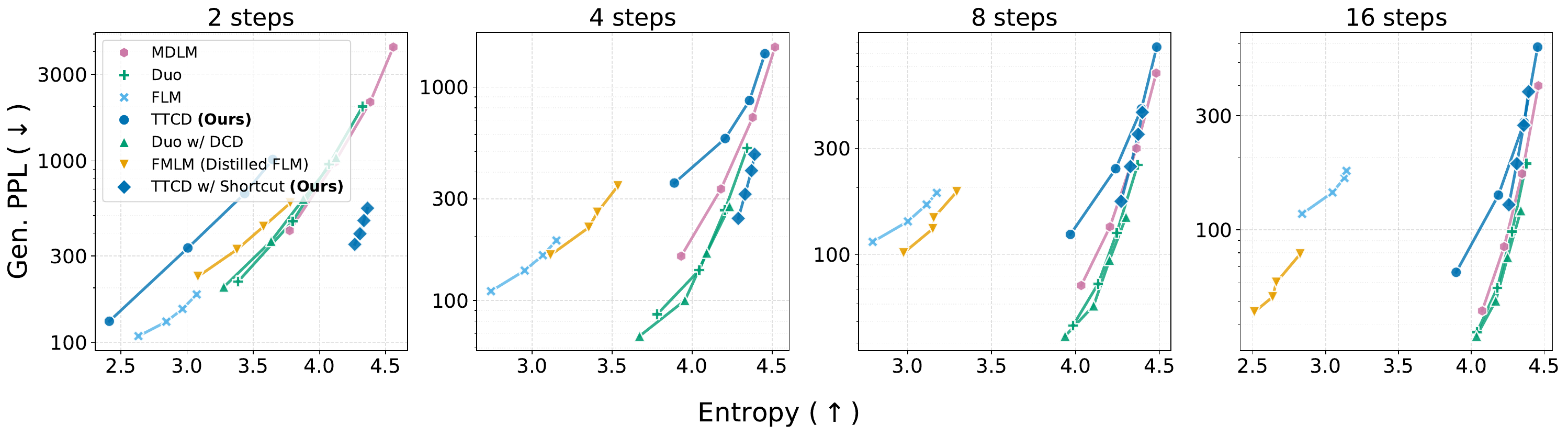}
    \caption{In this figure we present numbers for different canvas lengths, concretely, 32 tokens (top figure) and 128 tokens (bottom figure). This follows the setup of prefix-conditioned evaluation in Sec~\ref{subsec:owt_main}}
    \label{fig:owt_conditional_appendix}
\end{figure}

\begin{table*}[h]
\centering
\setlength{\tabcolsep}{3pt}
\begin{tabular}{lllrrrrrr}
\toprule
& & & \multicolumn{2}{c}{Steps=4} & \multicolumn{2}{c}{Steps=8} & \multicolumn{2}{c}{Steps=16} \\
\cmidrule(lr){4-5} \cmidrule(lr){6-7} \cmidrule(lr){8-9}
\textbf{Model} & \textbf{Param} & \textbf{Value} & \textbf{PPL} & \textbf{Ent} & \textbf{PPL} & \textbf{Ent} & \textbf{PPL} & \textbf{Ent} \\
\midrule
\multirow{8}{*}{\textbf{TTCD}} & \multirow{4}{*}{Temp} & 0.8 & 284.17 & 3.16 & 190.21 & 3.21 & 148.55 & 3.21 \\
 &  & 0.9 & 430.27 & 3.24 & 296.16 & 3.27 & 206.28 & 3.27 \\
 &  & 1.0 & 709.56 & 3.26 & 481.10 & 3.30 & 346.91 & 3.29 \\
 &  & 1.1 & 1357.99 & 3.30 & 854.66 & 3.33 & 609.29 & 3.33 \\
 & \multirow{4}{*}{Noise} & 0.5 & 221.28 & 3.09 & 112.24 & 3.16 & 75.80 & 3.16 \\
 &  & 0.7 & 396.90 & 3.20 & 213.07 & 3.24 & 142.25 & 3.23 \\
 &  & 0.9 & 572.61 & 3.26 & 344.16 & 3.29 & 234.07 & 3.29 \\
 &  & 1.1 & 1003.77 & 3.29 & 693.77 & 3.32 & 485.26 & 3.31 \\
\midrule
\multirow{8}{*}{\textbf{TTCD w/ Shortcut}} & \multirow{4}{*}{Temp} & 0.8 & 379.61 & 3.26 & 266.93 & 3.25 & 203.09 & 3.24 \\
 &  & 0.9 & 464.94 & 3.28 & 368.45 & 3.28 & 293.83 & 3.27 \\
 &  & 1.0 & 604.86 & 3.30 & 493.47 & 3.30 & 403.14 & 3.29 \\
 &  & 1.1 & 884.07 & 3.33 & 824.05 & 3.34 & 727.03 & 3.33 \\
 & \multirow{4}{*}{Noise} & 0.5 & 274.45 & 3.25 & 193.91 & 3.24 & 140.23 & 3.24 \\
 &  & 0.7 & 446.77 & 3.28 & 325.54 & 3.28 & 248.17 & 3.27 \\
 &  & 0.9 & 546.34 & 3.30 & 456.30 & 3.30 & 357.51 & 3.30 \\
 &  & 1.1 & 701.18 & 3.31 & 613.52 & 3.32 & 506.84 & 3.31 \\
\midrule
\multirow{4}{*}{\textbf{DUO}} & \multirow{4}{*}{Temp} & 0.8 & 110.96 & 3.09 & 60.44 & 3.16 & 45.78 & 3.18 \\
 &  & 0.9 & 159.03 & 3.18 & 84.86 & 3.21 & 61.01 & 3.22 \\
 &  & 1.0 & 312.76 & 3.23 & 134.84 & 3.25 & 109.19 & 3.27 \\
 &  & 1.1 & 586.66 & 3.28 & 242.40 & 3.29 & 204.14 & 3.30 \\
\midrule
\multirow{4}{*}{\textbf{DUO w/ DCD}} & \multirow{4}{*}{Temp} & 0.8 & 93.16 & 3.05 & 57.82 & 3.11 & 45.07 & 3.16 \\
 &  & 0.9 & 127.10 & 3.12 & 76.86 & 3.19 & 58.60 & 3.22 \\
 &  & 1.0 & 192.78 & 3.17 & 108.77 & 3.22 & 90.28 & 3.25 \\
 &  & 1.1 & 318.98 & 3.23 & 180.80 & 3.26 & 144.49 & 3.28 \\
\midrule
\multirow{8}{*}{\textbf{FLM}} & \multirow{4}{*}{Temp} & 0.8 & 329.42 & 2.17 & 331.00 & 2.22 & 350.52 & 2.27 \\
 &  & 0.9 & 362.81 & 2.29 & 373.97 & 2.36 & 383.47 & 2.39 \\
 &  & 1.0 & 417.01 & 2.35 & 422.69 & 2.40 & 440.46 & 2.43 \\
 &  & 1.1 & 468.89 & 2.38 & 468.62 & 2.40 & 453.61 & 2.39 \\
 & \multirow{4}{*}{Noise} & 0.5 & 425.62 & 2.34 & 432.66 & 2.39 & 451.24 & 2.41 \\
 &  & 0.7 & 415.02 & 2.35 & 440.73 & 2.40 & 453.65 & 2.43 \\
 &  & 0.9 & 431.62 & 2.38 & 467.43 & 2.42 & 479.81 & 2.46 \\
 &  & 1.1 & 438.59 & 2.37 & 449.94 & 2.41 & 455.35 & 2.43 \\
\midrule
\multirow{8}{*}{\textbf{FMLM}} & \multirow{4}{*}{Temp} & 0.8 & 529.98 & 2.51 & 348.51 & 2.41 & 221.68 & 2.31 \\
 &  & 0.9 & 605.97 & 2.61 & 393.30 & 2.51 & 230.36 & 2.33 \\
 &  & 1.0 & 715.28 & 2.62 & 464.29 & 2.49 & 298.41 & 2.35 \\
 &  & 1.1 & 895.99 & 2.66 & 539.03 & 2.52 & 376.03 & 2.41 \\
 & \multirow{4}{*}{Noise} & 0.5 & 749.89 & 2.63 & 499.06 & 2.53 & 352.15 & 2.48 \\
 &  & 0.7 & 765.98 & 2.64 & 493.50 & 2.49 & 313.31 & 2.39 \\
 &  & 0.9 & 793.94 & 2.66 & 522.81 & 2.54 & 331.02 & 2.42 \\
 &  & 1.1 & 772.07 & 2.65 & 472.93 & 2.50 & 297.96 & 2.37 \\
\midrule
\multirow{4}{*}{\textbf{MDLM}} & \multirow{4}{*}{Temp} & 0.8 & 186.86 & 3.14 & 77.90 & 3.17 & 50.80 & 3.18 \\
 &  & 0.9 & 325.01 & 3.21 & 131.32 & 3.23 & 88.32 & 3.24 \\
 &  & 1.0 & 639.78 & 3.29 & 262.13 & 3.27 & 163.56 & 3.28 \\
 &  & 1.1 & 1185.56 & 3.32 & 519.33 & 3.32 & 340.56 & 3.33 \\
\bottomrule
\end{tabular}
\caption{Prefix-conditioned evaluation. We report Gen PPL and Entropy for a  canvas length of 32}
\label{table:detailed_conditional32_app}
\end{table*}

\begin{table*}[h]
\centering
\setlength{\tabcolsep}{3pt}
\begin{tabular}{lllrrrrrr}
\toprule
& & & \multicolumn{2}{c}{Steps=4} & \multicolumn{2}{c}{Steps=8} & \multicolumn{2}{c}{Steps=16} \\
\cmidrule(lr){4-5} \cmidrule(lr){6-7} \cmidrule(lr){8-9}
\textbf{Model} & \textbf{Param} & \textbf{Value} & \textbf{PPL} & \textbf{Ent} & \textbf{PPL} & \textbf{Ent} & \textbf{PPL} & \textbf{Ent} \\
\midrule
\multirow{8}{*}{\textbf{TTCD}} & \multirow{4}{*}{Temp} & 0.8 & 295.71 & 4.10 & 185.74 & 4.17 & 131.90 & 4.17 \\
 &  & 0.9 & 579.87 & 4.27 & 344.04 & 4.33 & 228.84 & 4.31 \\
 &  & 1.0 & 1033.80 & 4.39 & 576.86 & 4.42 & 376.50 & 4.39 \\
 &  & 1.1 & 2523.21 & 4.52 & 1159.40 & 4.52 & 720.98 & 4.48 \\
 & \multirow{4}{*}{Noise} & 0.5 & 355.40 & 3.89 & 123.18 & 3.97 & 66.34 & 3.90 \\
 &  & 0.7 & 574.54 & 4.21 & 243.05 & 4.24 & 139.54 & 4.19 \\
 &  & 0.9 & 865.00 & 4.36 & 451.25 & 4.39 & 281.22 & 4.36 \\
 &  & 1.1 & 1433.54 & 4.46 & 855.23 & 4.48 & 580.31 & 4.45 \\
\midrule
\multirow{8}{*}{\textbf{TTCD w/ Shortcut}} & \multirow{4}{*}{Temp} & 0.8 & 277.93 & 4.26 & 207.55 & 4.24 & 159.27 & 4.22 \\
 &  & 0.9 & 351.34 & 4.32 & 282.76 & 4.32 & 224.72 & 4.30 \\
 &  & 1.0 & 437.82 & 4.38 & 371.18 & 4.38 & 310.44 & 4.37 \\
 &  & 1.1 & 602.59 & 4.44 & 566.46 & 4.45 & 495.48 & 4.44 \\
 & \multirow{4}{*}{Noise} & 0.5 & 242.54 & 4.29 & 173.47 & 4.27 & 127.20 & 4.26 \\
 &  & 0.7 & 315.03 & 4.33 & 248.49 & 4.33 & 188.99 & 4.31 \\
 &  & 0.9 & 406.30 & 4.37 & 347.03 & 4.37 & 273.28 & 4.36 \\
 &  & 1.1 & 484.32 & 4.39 & 435.71 & 4.40 & 377.48 & 4.39 \\
\midrule
\multirow{4}{*}{\textbf{DUO}} & \multirow{4}{*}{Temp} & 0.8 & 86.22 & 3.78 & 48.06 & 3.98 & 37.32 & 4.04 \\
 &  & 0.9 & 138.84 & 4.05 & 73.85 & 4.13 & 57.11 & 4.18 \\
 &  & 1.0 & 265.19 & 4.20 & 125.11 & 4.24 & 98.10 & 4.28 \\
 &  & 1.1 & 518.20 & 4.34 & 252.90 & 4.37 & 188.91 & 4.38 \\
\midrule
\multirow{4}{*}{\textbf{DUO w/ DCD}} & \multirow{4}{*}{Temp} & 0.8 & 68.17 & 3.67 & 43.04 & 3.94 & 35.89 & 4.04 \\
 &  & 0.9 & 100.23 & 3.96 & 58.92 & 4.11 & 50.44 & 4.17 \\
 &  & 1.0 & 167.30 & 4.09 & 94.46 & 4.20 & 76.71 & 4.25 \\
 &  & 1.1 & 276.37 & 4.23 & 147.43 & 4.30 & 120.23 & 4.34 \\
\midrule
\multirow{8}{*}{\textbf{FLM}} & \multirow{4}{*}{Temp} & 0.8 & 110.70 & 2.74 & 114.08 & 2.79 & 116.46 & 2.84 \\
 &  & 0.9 & 138.31 & 2.95 & 140.94 & 3.00 & 143.15 & 3.05 \\
 &  & 1.0 & 163.10 & 3.07 & 167.87 & 3.11 & 164.50 & 3.13 \\
 &  & 1.1 & 191.21 & 3.15 & 189.38 & 3.17 & 176.26 & 3.14 \\
 & \multirow{4}{*}{Noise} & 0.5 & 163.09 & 3.06 & 163.18 & 3.09 & 164.52 & 3.12 \\
 &  & 0.7 & 166.14 & 3.10 & 171.50 & 3.14 & 171.83 & 3.16 \\
 &  & 0.9 & 169.28 & 3.11 & 169.91 & 3.15 & 169.31 & 3.17 \\
 &  & 1.1 & 169.33 & 3.10 & 169.89 & 3.13 & 167.59 & 3.15 \\
\midrule
\multirow{8}{*}{\textbf{FMLM}} & \multirow{4}{*}{Temp} & 0.8 & 163.33 & 3.12 & 101.64 & 2.98 & 45.28 & 2.51 \\
 &  & 0.9 & 219.95 & 3.35 & 130.57 & 3.15 & 52.32 & 2.64 \\
 &  & 1.0 & 258.80 & 3.41 & 146.09 & 3.16 & 60.19 & 2.66 \\
 &  & 1.1 & 344.38 & 3.54 & 191.54 & 3.29 & 79.08 & 2.83 \\
 & \multirow{4}{*}{Noise} & 0.5 & 298.05 & 3.49 & 163.14 & 3.24 & 66.45 & 2.75 \\
 &  & 0.7 & 288.17 & 3.49 & 170.82 & 3.27 & 71.21 & 2.79 \\
 &  & 0.9 & 289.51 & 3.49 & 173.79 & 3.29 & 66.58 & 2.76 \\
 &  & 1.1 & 281.58 & 3.46 & 151.75 & 3.20 & 61.65 & 2.71 \\
\midrule
\multirow{4}{*}{\textbf{MDLM}} & \multirow{4}{*}{Temp} & 0.8 & 161.53 & 3.93 & 72.64 & 4.03 & 45.77 & 4.07 \\
 &  & 0.9 & 333.30 & 4.18 & 133.22 & 4.20 & 84.92 & 4.22 \\
 &  & 1.0 & 721.36 & 4.38 & 300.14 & 4.36 & 171.39 & 4.35 \\
 &  & 1.1 & 1540.07 & 4.52 & 653.37 & 4.48 & 400.01 & 4.46 \\
\bottomrule
\end{tabular}
\caption{Prefix-conditioned evaluation. We report Gen PPL and Entropy for a  canvas length of 128}
\label{table:detailed_conditional128_app}
\end{table*}

\subsection{Guidance Results}

\begin{figure}
    \centering
    \includegraphics[width=0.5\linewidth]{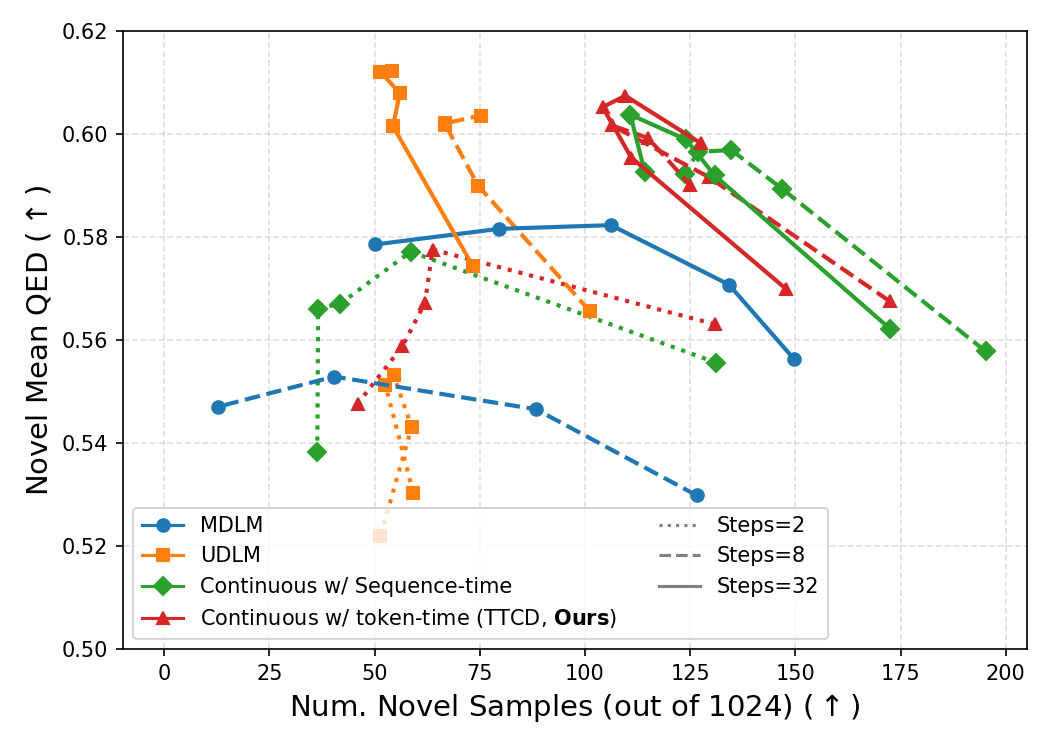}
    \caption{QED Mean figure}
    \label{fig:guidance_qed_appendix}
\end{figure}

%% file: app_qualitative_results.tex
\section{Qualitative Results}\label{sec:qualitiatve_app}

\subsection{Unconditional Generation}

\begin{figure}[H]
    \centering
    \scriptsize
    \begin{tcolorbox}[colback=purple!5, colframe=purple!60!black, boxrule=0.4pt, arc=1mm, left=3pt, right=3pt, top=1pt, bottom=1pt, boxsep=1pt]
    \begin{Verbatim}[breaklines=true, breakanywhere=true]
    <|endoftext|> regular too, but it’s Seabdam, no one got to get a hose to be operated.”\n\nEh I haven’t ever bought that one.\n\n“It’s cool, get 20 degrees.”\n\n“Well, the city is all hot, the weather is trying I get where I am.\n\nLique, free.\n\n“Curiously, check the weather.”\n\nTime’s cool.\n\nMin, NY: Tells, Modine.\n\nThu: Uh, you got in the cold.\n\n“It’s beautiful how warm it was.”\n\nSo Ridie, chance isn’t anwad you.\n\n“Luz come get it. How can that be?”\n\nLaz,…, but the weather isn’t pretty.\n\n“What winter? How is it?”\n\n“It’s not nicer. ” So.”\n\n“That’s nothing tho.”\n\n“Hey, I’ not that Siberian. Ha.” I hope it’s cold. There’s like no way. “You’re better just be able to tip up under the snow and never shine.”\n\n. don’t knowingly cough up weather yo. me.\n\nRead it, Sean.\n\nSun, It doesn’t sell in the town, and Water in the l is pretty.\n\n“Sunie, I don’t think it’s nice.”\n\n“Don’t.”\n\nOh, Anita. You’re a nice dude.\n\nSam, free. Leather? Joe.\n\nEh, Hein. Who cares?\n\nDonalds, free?\n\nYes.\n\nSam, you. Flanger’t stare his face. No pussy.\n\nOk. Get off the suits much better.\n\nOh, yeah.\n\nH’s, complaining. WD-4 on your litter box.\n\nAnyway, Rick likes to be a good trooper. Kurt’s a cool tradition.\n\nDan, Free.\n\n“Yeah, would you, uh, with really good cals.”\n\nLnette, free, it’s fine?\n\n“I don’t know if you know you didn’t do anything. Or just some of them. So call them out to the airport, and if we go, help initially.”\n\nDenie, be careful, if you can.\n\n.\n\n\n\nDamie, free.\n\n“Hey, you got the first gas stations to get shit out? Are you going to go to tape there?”\n\n.\n\n“No.”\n\nKissa Rogers, free, dead you still alive?\n\nShe’s being tough to say, “Ok -= Oh, Hara. I don’t think I forgot. ‘Cause you least. It’ I don’t thunk. I’m not going to do it at that point, sir. What made you so lucky to can? With the last hail of rain out yesterday.\n\n gotten you to and ma‘o polria; then it’s sure I’s, he wants, but you don’t go do anything like that.” Dan Lineh, “Where do you hang close into the city? And that’s if you wanna up the rocks by going out and rumble before going into the city again. That is not the opening of these contests. Remember now we need a sort of adventure. And that is a part of the time, anyway.”\n\nRicka Sang, “You wonder if there’s going to be anything? That’s what you are. Like um.abe who knows what’s going to happen and what?”\n\nLaughs.\n\nNughkins, Mayor\n\nWhoever kills or dead is Jessica, Scott or Sarah Vandidge, E.k., Scricano, Modanie. You’re not sure how the cops are down and see what is actually happening and we all know where we can go. If Claire does, sure, ‘Yeah, they just loceps in and pin it up at herself.”\n\nat. It ain’t hurt. The pack is full of the bags and tossing it into her, just to fix the generator without going limp. Michelle assumes Graham bursts to somewhere where there went is he blown up by this ‘la’s and<|endoftext|>    
    \end{Verbatim}
    \end{tcolorbox}    
    \caption{Generated sample from TTCD for 8 step generation}
    \label{fig:qualitative_base_app}
\end{figure}

\begin{figure}[H]
    \centering
    \scriptsize
    \begin{tcolorbox}[colback=purple!5, colframe=purple!60!black, boxrule=0.4pt, arc=1mm, left=3pt, right=3pt, top=1pt, bottom=1pt, boxsep=1pt]
    \begin{Verbatim}[breaklines=true, breakanywhere=true]
    <|endoftext|> points from the Epicberg route.<|endoftext|>A former WOL aschehed in pain and sexuallyjured himself from the attacks he had endured under his leadership, a senior Army soldier has denied alleged assault.\n\nThe trial, Neil McNoirham, pleaded not guilty to nearly four members of the Army’s training staff. The verdict was sent on Friday by Mavanagh’s former chief of staff, agreed to attend Paul’s re in charge, as demanded by his other ex-military services.\n\nHis appeal states out for people who are exiles and for those who were a prisoner, where they know that they do not serve.\n\nMr McN deham told the court that he survived the attack but used it to completely pat up on a boss. The social drama of the incident was brokered by the defence.\n\nMr McN deham replied: ‘Unwelcome to me’ before the judge interviewed him in January\n\nMr McN deham’s reply: ‘I confronted him once again physically and proceeded until the contact had collapsed, then intervened and held so repeatedly that he caused me to become interested in you as he used it to show that there was a wrench in the past and feet as a result of the assault.”\n\nInterversing evidence, the court heard Mr McN deham called the 18-year-old a ‘monster’ and said he ‘keeps her out victim by attacker she messed him up’.\n\nConfused, he urged him to ignore the avenues of sexual violence and claims that ‘destroy the justice of the profession’ rather than beating up Paul.\n\nIncredibly, while defending Paul personally, knowing that he had numerous memories and that he had repeatedly committed all his life wrong, the court heard he was overly immature.\n\nIn the closing, he defenceted claims that many members of his Independent colleagues were wrong, leading to the suggestion that his behaviour was not in the will.\n\nBut McNoirham continued to defend Paul on the occasion of his sixth court trial. He told court: ‘He put me out there but seemed to integrate and there’s no reason why I was there. If I had put Amy out there completely, it made me so tough, and I wasn’t OK with that job of a case because I would have been free.\n\n‘She just put me there and found that she forgot my life in the tub was her incapable, just took leave and then going in.\n\n‘She gave me an unacceptable shock to me and dealt with it.\n\n‘I’d be sentenced to such offences, but that didn’t make it an offence going to [last]. I was much more upright, a cockier and more than that helped cut the risk.\n\n‘And now I am someone who couldn’t be able to handle injuries by phone and I have to protect myself, not personal and undumseless.”<|endoftext|>(ABCNews30) – An 18-year-old man from west Sydney suburb who was second with a baby boy has been charged with persistent abuse of a gay boy.\n\nMatthew Shain was spotted at a magistrate court in Sydney about 17-30 this afternoon after a female lodged 15-year-old and then failed to contest to a court report.\n\nAlshain had a hearing of appeal on Wednesday to refuse to appear before the magistrate and ordered him to deny.\n\nHe admitted to the Daily Herald in the New Zealand Herald this afternoon that the NSW Labor government did not wish to take the baby boy. Philipa Mresla is awaiting a ruling by the Civil Rights Tribunal.\n\nMansultie Guassen told ABC last night: “I got a humiliating complaint before the court because I refused. At that point I went further to wanting him naked and he with a baby boy.\n\n“I went to the magistrate and said with ‘yes.’\n\n“I'm sure the gay person accepted the man.”\n\nThe 18-year-old, a man, was one of the first gay men dismissed by the highest court court in New Western Wales. He lost the boy in 2012. He said his mother had loved him to have sex with a baby boy.\n\nHe said he had undergone a middle surgery.\n\nBut Deputy Minister Demobitor Anthony Crow said there was “no evidence” whatsoever that the privacy of an object was protected in Australia.\n\nThe case is related to claims of anti-gay discrimination in the eleven cases published last year.\n\nBaul, who joined the legal team said as follows: “It is the courts fair to conduct an inquiry of court laws that prohibits civil unions of heterosexual or homosexual, but nothing breaches the Charter of legal and human rights any conduct by minors is not clear by the terms of the law<|endoftext|>
    \end{Verbatim}
    \end{tcolorbox}    
    \caption{Generated sample from TTCD for 32 step generation}
    \label{fig:qualitative_base_app}
\end{figure}

\begin{figure}[H]
    \centering
    \scriptsize
    \begin{tcolorbox}[colback=purple!5, colframe=purple!60!black, boxrule=0.4pt, arc=1mm, left=3pt, right=3pt, top=1pt, bottom=1pt, boxsep=1pt]
    \begin{Verbatim}[breaklines=true, breakanywhere=true]
    <|endoftext|> Tony heren has by increasing to his 0. Services.\n\nAfter a press conference, Jessica Martin, Carafa, who's had as in Duke's wedding. It came this evening. The mother-year-old, Robert Flynn, Scott Bush, also called over.\n\nThe eve of AFM and a letter to Oklahoma is on the eveningth on Wallace.\n\nMcz deserves to know he's been in honor to," she said it too.\n\nIf he was his honor, I think it would be a blessing, but not's something I'd I to give him sessions. But I think that can help protect Flynn and ensure my own success for PFM, this season.\n\nOn Wednesday, Brownky said a No. 1 call will win, not.\n\nJies said to McMaster he's quite difficult in watching the season seriously and downhill football, but it's suffered any offense directly here.\n\nShath said.\n\n"We're going to be coming out of his mindset for only in the head and the Gil in the Doctor was in the Arizona race. But I have no bones available.<|endoftext|>The bed is a game. I didn't know, and the first of you have played… Right now, this is making your wrong decision but isn't want to be.\n\nThepping player. Forda took a camera camera at PS4 with only a 15 by study. It includes blister- plays down, just about 10 minutes of the game.\n\nImage: Elathivora/NPR | 28 Points: Washington, atan://ius.last 823.com)\n\nAt the fourth home of the Crown family returns a special contest. Jose troubles while especially me injuriesion. They should be assumed she Npaid The Monster.\n\nThe Missouri crown began with the whacks of an pro ch wasrstolded, mad and smashing her, causing the balls and crushing her hearts, and occasionally sparking the fire of the ball from a church from twentysactic crashes.\n\nSome penalties to include money surrounded the fanned rumps, like you need a paycheck. More money for the anti-rational doctor.\n\nStill still how-to on tournament to work the time of video posts to three contests.\n\nWith more brilliant clips, an instance, had three and 4 or 12 runs per hour and on Stevenson’s game hook. Seems hilarious as one of the latest revelable and interesting.\n\nGron, his pair to appear on one-time show, showed the worst they were expected, and though still out of its future.\n\nJul the Reign. Source: No.media\n\nOrderborough are the leader. As it stands out, the U, feel lazy and one Littleville have the net. Little and it is created new for the rest of us.\n\nBut, once again, Quinn parac on another side. Hiskasley broke a of his arrest after he 'quipped his praises to the NDP.\n\nJust before, he jumped on a minute to be a great for cockchers when he came and then his teammates.\n\nMr Hatton turned out, having a flash of his joy in three weeks and friendships he, them. He winded down a second fourteenth skuit.\n\n“I guess I think that Lance can be handled and one who can't expect to seize it to the pretty dead,” Mrllington said, adding that he was cling to the similar level to “anmospwarm.”\n\nThe rocket explosions on the power and Carlton’s five victories were found in the Nation Nation on Canadian Air Everyone in the atmosphere. Last week that it usually destroys the title, but the spirit destroys life and ruin. If you would never read, I want to be here. Sign up to heroes of the Blue Planet and quotes back by there, he is around that.\n\n“Well, everyone, this is funny,” Thomas said. “All but that message is single. You can remember that better, that the astronauts are worked with and they don’t get very well for.” He wasbed down on being one of the oppressed using of defensive energy, for a hand. “He was forced to press or disrupt the group that should have in the stand. But they were of all have to do the refiler. He plays a lot for my teammates.”\n\nMartin was back from the coaching Scouts in 2005, is running on the world for the ball. He doesn’t the Natural Revolution, and the spirit of his hand. He isn’t necessarily a fan of you don’t like you to see what you think he can do it for any more or more climate.\n\nDoes that know where he may, but at the same time, they are untrustable.\n\n“We kids still angry that they are only rebels,<|endoftext|>
    \end{Verbatim}
    \end{tcolorbox}    
    \caption{Generated sample from TTCD (w/ Shortcut) for 2 step generation}
    \label{fig:qualitative_distilled_app}
\end{figure}

\begin{figure}[H]
    \centering
    \scriptsize
    \begin{tcolorbox}[colback=purple!5, colframe=purple!60!black, boxrule=0.4pt, arc=1mm, left=3pt, right=3pt, top=1pt, bottom=1pt, boxsep=1pt]
    \begin{Verbatim}[breaklines=true, breakanywhere=true]
    <|endoftext|> Tony Pettam has by increasing to his No. packages.\n\nDuring a press conference with Indianapolis Rouge founder Peafa, who's made appearances for IndianaM, the messages came this afternoon. A 24-year-old, Robert De Leay, was also in over.\n\nThe departure of AFM and his letter to Oklahoma is on the upswing for Wallace.\n\n"And possesses the mindset I've been in for years," Wallace said it off.\n\nIf he was something terrific, I think it would be a blessing, and that's something I think trying to give him sessions. But I think that can help keep him to gauge my successful success for AFM and this club.\n\nOn Wednesday, Brownko, a No. 15 call will return to comment.\n\nIndies have coach Mike Sugar's quite interested in watching the weekend facts and overall rankings, but it's worth researching him directly from Marso," Pettam said.\n\n"We're going to be coming up to his experiences for guys in the head and the advice our new coach was in the Indianapolis race. But I have no stone available."<|endoftext|>The bed is a game. We didn't know even and the first of players who plays so right now from this is doing it wrong in case doesn't want to win.\n\nTeaching player Paul Moya took a sideline camera at the corner with just a 15th conversation. It includes open-side routes, within about 11 minutes of the game.\n\n(Mario Carlos Cativote/EPA) Claim and Code in midfield at the endius.Take aLive.com)\n\nAt the United Cup of the United England, a rival winner, Juan Tucaka found himself with the admiration of Liverpool footballer Ali Terarone Yamoy.\n\nThe home tribute crowd credited the league’s legal chogrudence that “dashing” causes the balls and destroys the fans, such events to the fire of the ball from a faith from moralists and players.\n\nSome refuse to include broadcasters controlled the f-drumps, like they fear a paid and lucrative way for the anti-improveist.\n\nBut still go-to on tournament to the sport’s appear to be bullish.\n\nWith more candid clips, an engine fanatic has to eliminate 4 or 6 runs per hour and on Messi’s game hook. Seems hilarious as one of the more lamentable and exciting.\n\nGron, who came to tears on one-time show, joked that he had been expected, even though still out of greater danger than his legacy in the premerers Hall of Fame.\n\n“Those highlights are the head. As it stands out, the kid gets very lazy and one Littleland gets the net daily,” Bernstran told the rest of us.\n\n“Once did you go cucac on someone by foxgrawnas and we know that he’s being quipped and insulted to the boss. Like the time before, he went on a minute to be ‘happy’ where he is and then.”\n\nMr Hatham agreed out, placing a string of his excitement in three words and friendships he inspired them. He walted down a second four against skerers.\n\n“Of course I think that Lance can be handled and one I can't expect to seize it to the football situation,” Mr Hatham said, adding that he was relaxed to a special burden brightness.”\n\nWe’ll tell you what’s true. You can form your own view.\n\nAt The Independent, reading one tells us what to write.’s why, in at era exclusives lies and Brexit bias, more readers are turning turning an independent source. Subscribe from just Subscribep a day for extra exclusives, alerts and ebooks – all is no now.\n\nSubscribe nowFor the fans, this is unfortunate,” Thomas said. “People safe on at is primary. You can hear that anyway, that the fans are worked with what they don’t get very well for.”\n\nMr aside on being one of the as part of his health, but another added: “He was forced to press or disrupt the match and should fans upset the all mouth questions they were when you have to question the referee. He plays a lot for my players.”\n\nThomas was killed by the Colts officials in 2005 and is running on the field for the ball. He does hate the New Civil Freedom Movement, the black community of Britain, said he isn’t necessarily a kick journalist. It’s like some to ask what they think and can do it for any more or worse, with the best cause.\n\nHe said, although at the same time, they are untrustable.\n\n“Our players are angry that they are only believers, while<|endoftext|>
    \end{Verbatim}
    \end{tcolorbox}    
    \caption{Generated sample from TTCD (w/ Shortcut) for 8 step generation}
    \label{fig:qualitative_distilled_app}
\end{figure}

\subsection{Conditional Generation}

\begin{figure}[H]
    \centering
    \scriptsize
    \begin{tcolorbox}[title={Prompt}, colback=blue!5, colframe=blue!60!black, boxrule=0.4pt, arc=1mm, left=3pt, right=3pt, top=1pt, bottom=1pt, boxsep=1pt]
    \begin{Verbatim}[breaklines=true, breakanywhere=true]
    <|endoftext|> the Genesis console's first "killer app." with gamers buying the system just for the chance to play Sonic. Tom Kalinske took the opportunity to switch out the current in-pack game that came with the Genesis, Altered Beast, and replaced it with Sonic the Hedgehog, driving sales of the system even further.\n\nNot only was it Sonic's innovative gameplay that made him popular, but his edgy, yet friendly personality was a refreshing change for many young gamers, making him a hero they could better relate to.\n\nGenesis sales shot to the top as fast as Sonic's feet could carry them, and over the subsequent years,they overtook 60% of the video game market.\n\nThe Sonic Legacy\n\nSonic The Hedgehog remained the best selling Sega Genesis game through the life of the console. To feed the public demands, Sega also released an 8-bit version for the Sega Master System and quickly put Sonic Team into production on a sequel.\n\nThe monumental success of Sonic spun out into a major franchise that not only outlived the Sega Genesis but all Sega consoles.<|endoftext|>TROY. N.Y. -- Three people have been arraigned in connection with a fatal shooting in Troy.\n\nMalcolm Reid, D’Andre Jones and Stephon Johnson appeared in Troy police court on weapons charges. Meanwhile, a fourth suspect was released to the custody of his parents due to his age.\n\nPolice say 19-year-old Ahziarh Carter was killed and a 14-year-old injured, during a shooting Monday night on Hutton Avenue.\n\nThe officers saw a car drive off from the scene, leading police on a chase through Rensselaer and Albany Counties.\n\nThe car was stopped in Colonie, police say it had the four young men inside, along with multiple weapons.\n\n"There were three firearms located within the vehicle. There was one located outside the vehicle, that we certainly feel came from that vehicle. We'll be doing tests on those to see if they were used in any other shootings throughout the city or possibly the Capital District or elsewhere," said Troy Police Chief John Tedesco.\n\nThe 14-year-old shooting victim's name has not been released. The teen is expected to survive.<|endoftext|>Pittsburgh has a first dog.\n\nMayor Bill Peduto adopted 3-year-old Lilly, a Boston terrier, on Wednesday from the Western Pennsylvania Humane Society. Lilly made her first appearance in City Hall in the afternoon accompanied by the mayor’s girlfriend, Caitlin Lasky.\n\n“I think she likes Caitlin more than me, so I’ve got some work to do,” Peduto said, adding that Lasky would care for Lilly when he’s busy with city affairs.\n\nThe mayor hopes to bring Lilly to work with him rather than letting her stay at his Point Breeze home in a crate during the day.\n\n“She’s a very, very quiet dog and she’ll be able to sit during the day,” Peduto said. “It wouldn’t be the first time. Back in the day, Mayor (Tom) Murphy had a cat. I don’t know if there had ever been a dog, though.”\n\nHe said he’d been looking for a Boston terrier, and Lilly popped up “out of the blue” at the Humane Society. An elderly Pittsburgh woman, who could no longer take care of Lilly, took her to the shelter for adoption.\n\nHumane Society spokeswoman Kristen Lane described Lilly as a “special needs” dog because she is partially blind. Peduto said he intends to take the dog to a specialist for treatment.\n\n“She’s a ‘Burgh dog. She’s a Lilly, and she’s really sweet,” Lane said.\n\nThe mayor paid $155 for Lilly, the shelter’s typical fee, which includes sterilization, a microchip for identification, a health checkup, all necessary shots and deworming and flea treatment, Lane said.\n\nBob Bauder is a staff writer for Trib Total Media. He can be reached at 412-765-2312 or bbauder@tribweb.com.<|endoftext|>This week we journey through the looking glass into a world of very bizarre logic.\n\nThe Thrullg is Great!\n\nI don’t know how I never saw it before but the Thrullg is apparently actually quite good. He’s definitely better than the Trollkin Sorcerer and even though he costs three times as much you should take him instead of the Sorcerer every time for upkeep removal.
    \end{Verbatim}
    \end{tcolorbox}
    \begin{tcolorbox}[title={Duo w/ DCD}, colback=green!5, colframe=green!50!black, boxrule=0.4pt, arc=1mm, left=3pt, right=3pt, top=1pt, bottom=1pt, boxsep=1pt]
    \begin{Verbatim}[breaklines=true, breakanywhere=true]
    \n\nThe Thrullg’s Stamp\n\n\nHe’s actually very great!\n\nBut\n\nn’t<|endoftext|>
    \end{Verbatim}
    \end{tcolorbox}
    \begin{tcolorbox}[title={TTCD (w/ Shortcut)}, colback=purple!5, colframe=purple!60!black, boxrule=0.4pt, arc=1mm, left=3pt, right=3pt, top=1pt, bottom=1pt, boxsep=1pt]
    \begin{Verbatim}[breaklines=true, breakanywhere=true]
    Immediately, the he’ll measure one of the because funny things made in the years should The Th at the Sorcerer’s news that works,<|endoftext|>
    \end{Verbatim}
    \end{tcolorbox}
    \caption{Prefix-conditioned generated sample from TTCD (w/ Shortcut) and Duo w/ DCD for 4 step generation on canvas of 32 tokens}
    \label{fig:qualitative_distilled_app}
\end{figure}

\begin{figure}[H]
    \centering
    \scriptsize
    \begin{tcolorbox}[title={Prompt}, colback=blue!5, colframe=blue!60!black, boxrule=0.4pt, arc=1mm, left=3pt, right=3pt, top=1pt, bottom=1pt, boxsep=1pt]
    \begin{Verbatim}[breaklines=true, breakanywhere=true]
    <|endoftext|>Freedom” is not an abstract concept, relegated to ancient history books on a dusty shelf. It is the very tangible ability to think, to speak, to act and do without anyone saying I cannot, so long as my doing so does not interfere with my neighbor’s ability to do the same. When Vermonters remember that, we’ll recognize it is time to end the failed policy of prohibition by legalizing, taxing and regulating marijuana consumption.<|endoftext|>Free E-book Reader Headed To Nintendo 3DS In Japan\n\nBy Sato . July 6, 2013 . 3:00pm\n\nDai Nippon Printing, a major Japanese printing company, revealed during the Tokyo International Book Fair (which is currently taking place) that they and Nintendo will be introducing a new e-book reader geared for children this Fall for the Nintendo 3DS.\n\nAccording to the printing company, they have realized that Nintendo 3DS users consist of many grade-school children, making it a perfect device to introduce simple-to-read ebooks for their young audience.\n\nIn recent years, they’ve been heavily emphasizing on their e-book business, and are expecting to expand their market with Nintendo, who they are confident will guide them in the right direction with the Nintendo 3DS.\n\nThe downloadable software, known as honto, will be available for free sometime this Fall. It will feature a colorful design for the shop page, and a simple navigational system. The software is expected to sell novels for children, picture books, and various study material, which will also be categorized according to the reader’s age and kanji reading level. All the ebooks from honto will have font size adjusting options and will fully utilize both screens on the Nintendo 3DS, which can be read horizontally, like an actual book.\n\nDai Nippon Printing’s product appeal for this upcoming 3DS software is that they will be providing services using devices that many already own, in the 3DS and 3DS XL, so that parents can rest assured and won’t need to worry about not owning tablets or smart phones.\n\nThe price of the ebooks will be much cheaper than their actual printed counterparts, too, and will range between 700 to 2,000 yen.\n\nHonto is expcted to be released this Fall for Nintendo 3DS.<|endoftext|>UFC champ Conor McGregor is making all kinds of predictions ahead of his UFC 205 main event showdown with Eddie Alvarez.\n\nMcGregor (20-3 MMA, 8-1 UFC), the UFC featherweight champ who will challenge Alvarez (28-4 MMA, 3-1 UFC) for lightweight gold in the UFC 205 headliner, has already boldly forecasted a first-round knockout for the fight. Now he’s made his guess for pay-per-view numbers, and once again “The Notorious” said he’s expecting to set records.\n\n“I’m almost certain we’re talking the 2-million mark,” McGregor told MMAjunkie. “I feel it will break the 2-million mark. That’s what I’d like, and then go from there.”\n\nUFC 205 takes place Nov. 12 at Madison Square Garden in New York City. The main card airs on pay-per-view following prelims on FS1 and UFC Fight Pass.\n\nMcGregor, No. 1 in the latest USA TODAY Sports/MMAjunkie MMA featherweight rankings, said UFC 205 has all the necessary ingredients to draw a historic buy rate for an MMA event. His most recent fight, a majority decision victory over Nate Diaz at UFC 202 in August, reportedly broke the UFC pay-per-view record with 1.65 million buys.\n\nFrom McGregor’s perspective, UFC 202, which marked a rematch of his only octagon loss against Diaz at UFC 196 in March, was not given nearly the promotional support of his upcoming fight with Alvarez.\n\nMoreover, McGregor is aware of the historic nature of the event. Not only is he attempting to become the first simultaneous two-division titleholder for the UFC at the expense of No. 1-ranked lightweight Alvarez, but there are also two other championship fights scheduled as well as an abundance of other quality    
    \end{Verbatim}
    \end{tcolorbox}
    \begin{tcolorbox}[title={Duo w/ DCD}, colback=green!5, colframe=green!50!black, boxrule=0.4pt, arc=1mm, left=3pt, right=3pt, top=1pt, bottom=1pt, boxsep=1pt]
    \begin{Verbatim}[breaklines=true, breakanywhere=true]
    UFC fighters McGregor would be interested in, no matter how slim and incomprehensible the competition.\n\nHowever, UFC 205 himself McGregor that McGregor odds had peaked in the second of April 4 UFC 206, McGregor making it 59-1 in that fight\n\n“FinThis Alvarez, and Conor McGregor, it’s been good to right in the air for UFC 205, is only entertaining,” McGregor said\n\n“For the most part, the game to the fight McGregor-like is positioning, positioning, and positioning, and combat. He will trying to stand out as his opponent of Conor champion.\n\n<|endoftext|>
    \end{Verbatim}
    \end{tcolorbox}
    \begin{tcolorbox}[title={TTCD (w/ Shortcut)}, colback=purple!5, colframe=purple!60!black, boxrule=0.4pt, arc=1mm, left=3pt, right=3pt, top=1pt, bottom=1pt, boxsep=1pt]
    \begin{Verbatim}[breaklines=true, breakanywhere=true]
    in multiple matches, including Chris Channo and Nate Diaz, who rounds in as they began that fight on headline loss for Tony…r. Ultimate athlete is enjoying so, what I’d like to watch and keep them fans and the team, and a little again<|endoftext|>
    \end{Verbatim}
    \end{tcolorbox}
    \caption{Prefix-conditioned generated sample from TTCD (w/ Shortcut) and Duo w/ DCD for 4 step generation on canvas of 128 tokens}
    \label{fig:qualitative_distilled_app}
\end{figure}

%% file: checklist.tex
\section*{NeurIPS Paper Checklist}



\begin{enumerate}

\item {\bf Claims}
    \item[] Question: Do the main claims made in the abstract and introduction accurately reflect the paper's contributions and scope?
    \item[] Answer: \answerYes{} 
    \item[] Justification: All claims in the abstract are accurately represented in the paper
    \item[] Guidelines:
    \begin{itemize}
        \item The answer \answerNA{} means that the abstract and introduction do not include the claims made in the paper.
        \item The abstract and/or introduction should clearly state the claims made, including the contributions made in the paper and important assumptions and limitations. A \answerNo{} or \answerNA{} answer to this question will not be perceived well by the reviewers. 
        \item The claims made should match theoretical and experimental results, and reflect how much the results can be expected to generalize to other settings. 
        \item It is fine to include aspirational goals as motivation as long as it is clear that these goals are not attained by the paper. 
    \end{itemize}

\item {\bf Limitations}
    \item[] Question: Does the paper discuss the limitations of the work performed by the authors?
    \item[] Answer: \answerYes{} 
    \item[] Justification: We have Sec~\ref{sec:conclusion} in the paper which discusses our limitations
    \item[] Guidelines:
    \begin{itemize}
        \item The answer \answerNA{} means that the paper has no limitation while the answer \answerNo{} means that the paper has limitations, but those are not discussed in the paper. 
        \item The authors are encouraged to create a separate ``Limitations'' section in their paper.
        \item The paper should point out any strong assumptions and how robust the results are to violations of these assumptions (e.g., independence assumptions, noiseless settings, model well-specification, asymptotic approximations only holding locally). The authors should reflect on how these assumptions might be violated in practice and what the implications would be.
        \item The authors should reflect on the scope of the claims made, e.g., if the approach was only tested on a few datasets or with a few runs. In general, empirical results often depend on implicit assumptions, which should be articulated.
        \item The authors should reflect on the factors that influence the performance of the approach. For example, a facial recognition algorithm may perform poorly when image resolution is low or images are taken in low lighting. Or a speech-to-text system might not be used reliably to provide closed captions for online lectures because it fails to handle technical jargon.
        \item The authors should discuss the computational efficiency of the proposed algorithms and how they scale with dataset size.
        \item If applicable, the authors should discuss possible limitations of their approach to address problems of privacy and fairness.
        \item While the authors might fear that complete honesty about limitations might be used by reviewers as grounds for rejection, a worse outcome might be that reviewers discover limitations that aren't acknowledged in the paper. The authors should use their best judgment and recognize that individual actions in favor of transparency play an important role in developing norms that preserve the integrity of the community. Reviewers will be specifically instructed to not penalize honesty concerning limitations.
    \end{itemize}

\item {\bf Theory assumptions and proofs}
    \item[] Question: For each theoretical result, does the paper provide the full set of assumptions and a complete (and correct) proof?
    \item[] Answer: \answerYes{} 
    \item[] Justification: We only have one lemma in the paper and we prove that in Sec~\ref{subsec:proof_of_lemma}.
    \item[] Guidelines:
    \begin{itemize}
        \item The answer \answerNA{} means that the paper does not include theoretical results. 
        \item All the theorems, formulas, and proofs in the paper should be numbered and cross-referenced.
        \item All assumptions should be clearly stated or referenced in the statement of any theorems.
        \item The proofs can either appear in the main paper or the supplemental material, but if they appear in the supplemental material, the authors are encouraged to provide a short proof sketch to provide intuition. 
        \item Inversely, any informal proof provided in the core of the paper should be complemented by formal proofs provided in appendix or supplemental material.
        \item Theorems and Lemmas that the proof relies upon should be properly referenced. 
    \end{itemize}

    \item {\bf Experimental result reproducibility}
    \item[] Question: Does the paper fully disclose all the information needed to reproduce the main experimental results of the paper to the extent that it affects the main claims and/or conclusions of the paper (regardless of whether the code and data are provided or not)?
    \item[] Answer: \answerYes{} 
    \item[] Justification: Yes, all details are shared. Work is reproducible. 
    \item[] Guidelines:
    \begin{itemize}
        \item The answer \answerNA{} means that the paper does not include experiments.
        \item If the paper includes experiments, a \answerNo{} answer to this question will not be perceived well by the reviewers: Making the paper reproducible is important, regardless of whether the code and data are provided or not.
        \item If the contribution is a dataset and\slash or model, the authors should describe the steps taken to make their results reproducible or verifiable. 
        \item Depending on the contribution, reproducibility can be accomplished in various ways. For example, if the contribution is a novel architecture, describing the architecture fully might suffice, or if the contribution is a specific model and empirical evaluation, it may be necessary to either make it possible for others to replicate the model with the same dataset, or provide access to the model. In general. releasing code and data is often one good way to accomplish this, but reproducibility can also be provided via detailed instructions for how to replicate the results, access to a hosted model (e.g., in the case of a large language model), releasing of a model checkpoint, or other means that are appropriate to the research performed.
        \item While NeurIPS does not require releasing code, the conference does require all submissions to provide some reasonable avenue for reproducibility, which may depend on the nature of the contribution. For example
        \begin{enumerate}
            \item If the contribution is primarily a new algorithm, the paper should make it clear how to reproduce that algorithm.
            \item If the contribution is primarily a new model architecture, the paper should describe the architecture clearly and fully.
            \item If the contribution is a new model (e.g., a large language model), then there should either be a way to access this model for reproducing the results or a way to reproduce the model (e.g., with an open-source dataset or instructions for how to construct the dataset).
            \item We recognize that reproducibility may be tricky in some cases, in which case authors are welcome to describe the particular way they provide for reproducibility. In the case of closed-source models, it may be that access to the model is limited in some way (e.g., to registered users), but it should be possible for other researchers to have some path to reproducing or verifying the results.
        \end{enumerate}
    \end{itemize}

\item {\bf Open access to data and code}
    \item[] Question: Does the paper provide open access to the data and code, with sufficient instructions to faithfully reproduce the main experimental results, as described in supplemental material?
    \item[] Answer: \answerNo{} 
    \item[] Justification: The dataset is open-source. We use open-source repository for experimentation and can be done easily. 
    \item[] Guidelines:
    \begin{itemize}
        \item The answer \answerNA{} means that paper does not include experiments requiring code.
        \item Please see the NeurIPS code and data submission guidelines (\url{https://neurips.cc/public/guides/CodeSubmissionPolicy}) for more details.
        \item While we encourage the release of code and data, we understand that this might not be possible, so \answerNo{} is an acceptable answer. Papers cannot be rejected simply for not including code, unless this is central to the contribution (e.g., for a new open-source benchmark).
        \item The instructions should contain the exact command and environment needed to run to reproduce the results. See the NeurIPS code and data submission guidelines (\url{https://neurips.cc/public/guides/CodeSubmissionPolicy}) for more details.
        \item The authors should provide instructions on data access and preparation, including how to access the raw data, preprocessed data, intermediate data, and generated data, etc.
        \item The authors should provide scripts to reproduce all experimental results for the new proposed method and baselines. If only a subset of experiments are reproducible, they should state which ones are omitted from the script and why.
        \item At submission time, to preserve anonymity, the authors should release anonymized versions (if applicable).
        \item Providing as much information as possible in supplemental material (appended to the paper) is recommended, but including URLs to data and code is permitted.
    \end{itemize}

\item {\bf Experimental setting/details}
    \item[] Question: Does the paper specify all the training and test details (e.g., data splits, hyperparameters, how they were chosen, type of optimizer) necessary to understand the results?
    \item[] Answer: \answerYes{} 
    \item[] Justification: We follow prior work ~\cite{sahoo2024simple} for our main results. 
    \item[] Guidelines:
    \begin{itemize}
        \item The answer \answerNA{} means that the paper does not include experiments.
        \item The experimental setting should be presented in the core of the paper to a level of detail that is necessary to appreciate the results and make sense of them.
        \item The full details can be provided either with the code, in appendix, or as supplemental material.
    \end{itemize}

\item {\bf Experiment statistical significance}
    \item[] Question: Does the paper report error bars suitably and correctly defined or other appropriate information about the statistical significance of the experiments?
    \item[] Answer: \answerNo{} 
    \item[] Justification: The standard errors are small for all the numbers reported. We have computed standard deviation over three seeds. The reported numbers in paper are average over three seeds. For ease of exposition error bars are omitted. 
    \item[] Guidelines:
    \begin{itemize}
        \item The answer \answerNA{} means that the paper does not include experiments.
        \item The authors should answer \answerYes{} if the results are accompanied by error bars, confidence intervals, or statistical significance tests, at least for the experiments that support the main claims of the paper.
        \item The factors of variability that the error bars are capturing should be clearly stated (for example, train/test split, initialization, random drawing of some parameter, or overall run with given experimental conditions).
        \item The method for calculating the error bars should be explained (closed form formula, call to a library function, bootstrap, etc.)
        \item The assumptions made should be given (e.g., Normally distributed errors).
        \item It should be clear whether the error bar is the standard deviation or the standard error of the mean.
        \item It is OK to report 1-sigma error bars, but one should state it. The authors should preferably report a 2-sigma error bar than state that they have a 96\% CI, if the hypothesis of Normality of errors is not verified.
        \item For asymmetric distributions, the authors should be careful not to show in tables or figures symmetric error bars that would yield results that are out of range (e.g., negative error rates).
        \item If error bars are reported in tables or plots, the authors should explain in the text how they were calculated and reference the corresponding figures or tables in the text.
    \end{itemize}

\item {\bf Experiments compute resources}
    \item[] Question: For each experiment, does the paper provide sufficient information on the computer resources (type of compute workers, memory, time of execution) needed to reproduce the experiments?
    \item[] Answer: \answerYes{} 
    \item[] Justification: Yes, we have appendix Sec~\ref{app:compute}.
    \item[] Guidelines:
    \begin{itemize}
        \item The answer \answerNA{} means that the paper does not include experiments.
        \item The paper should indicate the type of compute workers CPU or GPU, internal cluster, or cloud provider, including relevant memory and storage.
        \item The paper should provide the amount of compute required for each of the individual experimental runs as well as estimate the total compute. 
        \item The paper should disclose whether the full research project required more compute than the experiments reported in the paper (e.g., preliminary or failed experiments that didn't make it into the paper). 
    \end{itemize}
    
\item {\bf Code of ethics}
    \item[] Question: Does the research conducted in the paper conform, in every respect, with the NeurIPS Code of Ethics \url{https://neurips.cc/public/EthicsGuidelines}?
    \item[] Answer: \answerYes{} 
    \item[] Justification: We conform to the code of ethics. 
    \item[] Guidelines:
    \begin{itemize}
        \item The answer \answerNA{} means that the authors have not reviewed the NeurIPS Code of Ethics.
        \item If the authors answer \answerNo, they should explain the special circumstances that require a deviation from the Code of Ethics.
        \item The authors should make sure to preserve anonymity (e.g., if there is a special consideration due to laws or regulations in their jurisdiction).
    \end{itemize}

\item {\bf Broader impacts}
    \item[] Question: Does the paper discuss both potential positive societal impacts and negative societal impacts of the work performed?
    \item[] Answer: \answerNA{} 
    \item[] Justification: Our work advances language models and there are no additional malicious use-cases to the best of our knowledge, beyond using better language models for harm. 
    \item[] Guidelines:
    \begin{itemize}
        \item The answer \answerNA{} means that there is no societal impact of the work performed.
        \item If the authors answer \answerNA{} or \answerNo, they should explain why their work has no societal impact or why the paper does not address societal impact.
        \item Examples of negative societal impacts include potential malicious or unintended uses (e.g., disinformation, generating fake profiles, surveillance), fairness considerations (e.g., deployment of technologies that could make decisions that unfairly impact specific groups), privacy considerations, and security considerations.
        \item The conference expects that many papers will be foundational research and not tied to particular applications, let alone deployments. However, if there is a direct path to any negative applications, the authors should point it out. For example, it is legitimate to point out that an improvement in the quality of generative models could be used to generate Deepfakes for disinformation. On the other hand, it is not needed to point out that a generic algorithm for optimizing neural networks could enable people to train models that generate Deepfakes faster.
        \item The authors should consider possible harms that could arise when the technology is being used as intended and functioning correctly, harms that could arise when the technology is being used as intended but gives incorrect results, and harms following from (intentional or unintentional) misuse of the technology.
        \item If there are negative societal impacts, the authors could also discuss possible mitigation strategies (e.g., gated release of models, providing defenses in addition to attacks, mechanisms for monitoring misuse, mechanisms to monitor how a system learns from feedback over time, improving the efficiency and accessibility of ML).
    \end{itemize}
    
\item {\bf Safeguards}
    \item[] Question: Does the paper describe safeguards that have been put in place for responsible release of data or models that have a high risk for misuse (e.g., pre-trained language models, image generators, or scraped datasets)?
    \item[] Answer: \answerNo{} 
    \item[] Justification: No such risks
    \item[] Guidelines:
    \begin{itemize}
        \item The answer \answerNA{} means that the paper poses no such risks.
        \item Released models that have a high risk for misuse or dual-use should be released with necessary safeguards to allow for controlled use of the model, for example by requiring that users adhere to usage guidelines or restrictions to access the model or implementing safety filters. 
        \item Datasets that have been scraped from the Internet could pose safety risks. The authors should describe how they avoided releasing unsafe images.
        \item We recognize that providing effective safeguards is challenging, and many papers do not require this, but we encourage authors to take this into account and make a best faith effort.
    \end{itemize}

\item {\bf Licenses for existing assets}
    \item[] Question: Are the creators or original owners of assets (e.g., code, data, models), used in the paper, properly credited and are the license and terms of use explicitly mentioned and properly respected?
    \item[] Answer: \answerNA{} 
    \item[] Justification: All the data used is open-source and appropriate citations are given. 
    \item[] Guidelines:
    \begin{itemize}
        \item The answer \answerNA{} means that the paper does not use existing assets.
        \item The authors should cite the original paper that produced the code package or dataset.
        \item The authors should state which version of the asset is used and, if possible, include a URL.
        \item The name of the license (e.g., CC-BY 4.0) should be included for each asset.
        \item For scraped data from a particular source (e.g., website), the copyright and terms of service of that source should be provided.
        \item If assets are released, the license, copyright information, and terms of use in the package should be provided. For popular datasets, \url{paperswithcode.com/datasets} has curated licenses for some datasets. Their licensing guide can help determine the license of a dataset.
        \item For existing datasets that are re-packaged, both the original license and the license of the derived asset (if it has changed) should be provided.
        \item If this information is not available online, the authors are encouraged to reach out to the asset's creators.
    \end{itemize}

\item {\bf New assets}
    \item[] Question: Are new assets introduced in the paper well documented and is the documentation provided alongside the assets?
    \item[] Answer: \answerNA{} 
    \item[] Justification: No new assets 
    \item[] Guidelines:
    \begin{itemize}
        \item The answer \answerNA{} means that the paper does not release new assets.
        \item Researchers should communicate the details of the dataset\slash code\slash model as part of their submissions via structured templates. This includes details about training, license, limitations, etc. 
        \item The paper should discuss whether and how consent was obtained from people whose asset is used.
        \item At submission time, remember to anonymize your assets (if applicable). You can either create an anonymized URL or include an anonymized zip file.
    \end{itemize}

\item {\bf Crowdsourcing and research with human subjects}
    \item[] Question: For crowdsourcing experiments and research with human subjects, does the paper include the full text of instructions given to participants and screenshots, if applicable, as well as details about compensation (if any)? 
    \item[] Answer: \answerNA{} 
    \item[] Justification: No crowdsourced research
    \item[] Guidelines:
    \begin{itemize}
        \item The answer \answerNA{} means that the paper does not involve crowdsourcing nor research with human subjects.
        \item Including this information in the supplemental material is fine, but if the main contribution of the paper involves human subjects, then as much detail as possible should be included in the main paper. 
        \item According to the NeurIPS Code of Ethics, workers involved in data collection, curation, or other labor should be paid at least the minimum wage in the country of the data collector. 
    \end{itemize}

\item {\bf Institutional review board (IRB) approvals or equivalent for research with human subjects}
    \item[] Question: Does the paper describe potential risks incurred by study participants, whether such risks were disclosed to the subjects, and whether Institutional Review Board (IRB) approvals (or an equivalent approval/review based on the requirements of your country or institution) were obtained?
    \item[] Answer: \answerNA{} 
    \item[] Justification: No such research 
    \item[] Guidelines:
    \begin{itemize}
        \item The answer \answerNA{} means that the paper does not involve crowdsourcing nor research with human subjects.
        \item Depending on the country in which research is conducted, IRB approval (or equivalent) may be required for any human subjects research. If you obtained IRB approval, you should clearly state this in the paper. 
        \item We recognize that the procedures for this may vary significantly between institutions and locations, and we expect authors to adhere to the NeurIPS Code of Ethics and the guidelines for their institution. 
        \item For initial submissions, do not include any information that would break anonymity (if applicable), such as the institution conducting the review.
    \end{itemize}

\item {\bf Declaration of LLM usage}
    \item[] Question: Does the paper describe the usage of LLMs if it is an important, original, or non-standard component of the core methods in this research? Note that if the LLM is used only for writing, editing, or formatting purposes and does \emph{not} impact the core methodology, scientific rigor, or originality of the research, declaration is not required.
    \item[] Answer: \answerNA{} 
    \item[] Justification: LLMs are not used. 
    \item[] Guidelines:
    \begin{itemize}
        \item The answer \answerNA{} means that the core method development in this research does not involve LLMs as any important, original, or non-standard components.
        \item Please refer to our LLM policy in the NeurIPS handbook for what should or should not be described.
    \end{itemize}

\end{enumerate}

%% file: neurips_2026.bbl
\begin{thebibliography}{10}

\bibitem{austin2021structured}
Jacob Austin, Daniel~D Johnson, Jonathan Ho, Daniel Tarlow, and Rianne Van Den~Berg.
\newblock Structured denoising diffusion models in discrete state-spaces.
\newblock {\em Advances in neural information processing systems}, 34:17981--17993, 2021.

\bibitem{avdeyev2023dirichlet}
Pavel Avdeyev, Chenlai Shi, Yuhao Tan, Kseniia Dudnyk, and Jian Zhou.
\newblock Dirichlet diffusion score model for biological sequence generation.
\newblock In {\em International Conference on Machine Learning}, pages 1276--1301. PMLR, 2023.

\bibitem{boffi2025build}
Nicholas~M Boffi, Michael~S Albergo, and Eric Vanden-Eijnden.
\newblock How to build a consistency model: Learning flow maps via self-distillation.
\newblock {\em arXiv preprint arXiv:2505.18825}, 2025.

\bibitem{davis2024fisher}
Oscar Davis, Samuel Kessler, Mircea Petrache, {\.I}smail~{\.I} Ceylan, Michael Bronstein, and Avishek~J Bose.
\newblock Fisher flow matching for generative modeling over discrete data.
\newblock {\em Advances in Neural Information Processing Systems}, 37:139054--139084, 2024.

\bibitem{dieleman2022continuous}
Sander Dieleman, Laurent Sartran, Arman Roshannai, Nikolay Savinov, Yaroslav Ganin, Pierre~H Richemond, Arnaud Doucet, Robin Strudel, Chris Dyer, Conor Durkan, et~al.
\newblock Continuous diffusion for categorical data.
\newblock {\em arXiv preprint arXiv:2211.15089}, 2022.

\bibitem{frans2024one}
Kevin Frans, Danijar Hafner, Sergey Levine, and Pieter Abbeel.
\newblock One step diffusion via shortcut models.
\newblock {\em arXiv preprint arXiv:2410.12557}, 2024.

\bibitem{gulrajani2023likelihood}
Ishaan Gulrajani and Tatsunori~B Hashimoto.
\newblock Likelihood-based diffusion language models.
\newblock {\em Advances in Neural Information Processing Systems}, 36:16693--16715, 2023.

\bibitem{jocontinuous}
Jaehyeong Jo and Sung~Ju Hwang.
\newblock Continuous diffusion model for language modeling.
\newblock In {\em The Thirty-ninth Annual Conference on Neural Information Processing Systems}.

\bibitem{kim2025train}
Jaeyeon Kim, Kulin Shah, Vasilis Kontonis, Sham Kakade, and Sitan Chen.
\newblock Train for the worst, plan for the best: Understanding token ordering in masked diffusions.
\newblock {\em arXiv preprint arXiv:2502.06768}, 2025.

\bibitem{lee2026one}
Chanhyuk Lee, Jaehoon Yoo, Manan Agarwal, Sheel Shah, Jerry Huang, Aditi Raghunathan, Seunghoon Hong, Nicholas~M Boffi, and Jinwoo Kim.
\newblock One-step language modeling via continuous denoising.
\newblock {\em arXiv preprint arXiv:2602.16813}, 2026.

\bibitem{li2022diffusion}
Xiang Li, John Thickstun, Ishaan Gulrajani, Percy~S Liang, and Tatsunori~B Hashimoto.
\newblock Diffusion-lm improves controllable text generation.
\newblock {\em Advances in neural information processing systems}, 35:4328--4343, 2022.

\bibitem{lovelace2026stop}
Justin Lovelace, Christian Belardi, Sofian Zalouk, Adhitya Polavaram, Srivatsa Kundurthy, and Kilian~Q Weinberger.
\newblock Stop-think-autoregress: Language modeling with latent diffusion planning.
\newblock {\em arXiv preprint arXiv:2602.20528}, 2026.

\bibitem{lovelace2023latent}
Justin Lovelace, Varsha Kishore, Chao Wan, Eliot Shekhtman, and Kilian~Q Weinberger.
\newblock Latent diffusion for language generation.
\newblock {\em Conference on Neural Information Processing Systems (NeurIPS)}, 2023.

\bibitem{nie2025large}
Shen Nie, Fengqi Zhu, Zebin You, Xiaolu Zhang, Jingyang Ou, Jun Hu, Jun Zhou, Yankai Lin, Ji-Rong Wen, and Chongxuan Li.
\newblock Large language diffusion models.
\newblock {\em arXiv preprint arXiv:2502.09992}, 2025.

\bibitem{peebles2023scalable}
William Peebles and Saining Xie.
\newblock Scalable diffusion models with transformers.
\newblock In {\em Proceedings of the IEEE/CVF international conference on computer vision}, pages 4195--4205, 2023.

\bibitem{pynadath2025candi}
Patrick Pynadath, Jiaxin Shi, and Ruqi Zhang.
\newblock Candi: Hybrid discrete-continuous diffusion models.
\newblock {\em arXiv preprint arXiv:2510.22510}, 2025.

\bibitem{sahoo2024simple}
Subham Sahoo, Marianne Arriola, Yair Schiff, Aaron Gokaslan, Edgar Marroquin, Justin Chiu, Alexander Rush, and Volodymyr Kuleshov.
\newblock Simple and effective masked diffusion language models.
\newblock {\em Advances in Neural Information Processing Systems}, 37:130136--130184, 2024.

\bibitem{sahoo2025diffusion}
Subham~Sekhar Sahoo, Justin Deschenaux, Aaron Gokaslan, Guanghan Wang, Justin Chiu, and Volodymyr Kuleshov.
\newblock The diffusion duality.
\newblock {\em arXiv preprint arXiv:2506.10892}, 2025.

\bibitem{salimans2022progressive}
Tim Salimans and Jonathan Ho.
\newblock Progressive distillation for fast sampling of diffusion models.
\newblock {\em arXiv preprint arXiv:2202.00512}, 2022.

\bibitem{schiff2025simple}
Yair Schiff, Subham~Sekhar Sahoo, Hao Phung, Guanghan Wang, Alexander Rush, Volodymyr Kuleshov, Hugo Dalla-Torre, Sam Boshar, Bernardo~P de~Almeida, and Thomas Pierrot.
\newblock Simple guidance mechanisms for discrete diffusion models.
\newblock In {\em ... International Conference on Learning Representations}, volume 2025, page 44153, 2025.

\bibitem{shah2024causal}
Kulin Shah, Nishanth Dikkala, Xin Wang, and Rina Panigrahy.
\newblock Causal language modeling can elicit search and reasoning capabilities on logic puzzles.
\newblock {\em Advances in Neural Information Processing Systems}, 37:56674--56702, 2024.

\bibitem{stark2024dirichlet}
Hannes Stark, Bowen Jing, Chenyu Wang, Gabriele Corso, Bonnie Berger, Regina Barzilay, and Tommi Jaakkola.
\newblock Dirichlet flow matching with applications to dna sequence design.
\newblock In {\em International Conference on Machine Learning}, pages 46495--46513. PMLR, 2024.

\bibitem{wu2025fast}
Chengyue Wu, Hao Zhang, Shuchen Xue, Zhijian Liu, Shizhe Diao, Ligeng Zhu, Ping Luo, Song Han, and Enze Xie.
\newblock Fast-dllm: Training-free acceleration of diffusion llm by enabling kv cache and parallel decoding.
\newblock {\em arXiv preprint arXiv:2505.22618}, 2025.

\bibitem{ye2025dream}
Jiacheng Ye, Zhihui Xie, Lin Zheng, Jiahui Gao, Zirui Wu, Xin Jiang, Zhenguo Li, and Lingpeng Kong.
\newblock Dream 7b: Diffusion large language models.
\newblock {\em arXiv preprint arXiv:2508.15487}, 2025.

\bibitem{ye2023dinoiser}
Jiasheng Ye, Zaixiang Zheng, Yu~Bao, Lihua Qian, and Mingxuan Wang.
\newblock Dinoiser: Diffused conditional sequence learning by manipulating noises.
\newblock {\em arXiv preprint arXiv:2302.10025}, 2023.

\bibitem{zheng2025continuously}
Huangjie Zheng, Shansan Gong, Ruixiang Zhang, Tianrong Chen, Jiatao Gu, Mingyuan Zhou, Navdeep Jaitly, and Yizhe Zhang.
\newblock Continuously augmented discrete diffusion model for categorical generative modeling.
\newblock {\em arXiv preprint arXiv:2510.01329}, 2025.

\bibitem{zheng2024masked}
Kaiwen Zheng, Yongxin Chen, Hanzi Mao, Ming-Yu Liu, Jun Zhu, and Qinsheng Zhang.
\newblock Masked diffusion models are secretly time-agnostic masked models and exploit inaccurate categorical sampling.
\newblock {\em arXiv preprint arXiv:2409.02908}, 2024.

\bibitem{zhou2025coevolutionary}
Cai Zhou, Chenxiao Yang, Yi~Hu, Chenyu Wang, Chubin Zhang, Muhan Zhang, Lester Mackey, Tommi Jaakkola, Stephen Bates, and Dinghuai Zhang.
\newblock Coevolutionary continuous discrete diffusion: Make your diffusion language model a latent reasoner.
\newblock {\em arXiv preprint arXiv:2510.03206}, 2025.

\end{thebibliography}
